\edef\,{\thinspace} \edef\;{\thickspace} \edef\!{\negthinspace} 
\def\dispmuskip{\thinmuskip= 3mu plus 0mu minus 2mu \medmuskip=  4mu plus 2mu minus 2mu \thickmuskip=5mu plus 5mu minus 2mu}
\def\textmuskip{\thinmuskip= 0mu                    \medmuskip=  1mu plus 1mu minus 1mu \thickmuskip=2mu plus 3mu minus 1mu}
\def\beq{\dispmuskip\begin{equation}}    \def\eeq{\end{equation}\textmuskip}
\def\beqn{\dispmuskip\begin{displaymath}}\def\eeqn{\end{displaymath}\textmuskip}
\def\bqa{\dispmuskip\begin{eqnarray}}    \def\eqa{\end{eqnarray}\textmuskip}
\def\bqan{\dispmuskip\begin{eqnarray*}}  \def\eqan{\end{eqnarray*}\textmuskip}
\def\paradot#1{\vspace{1.3ex plus 0.5ex minus 0.5ex}\noindent{\bf{#1.}}}
\def\paranodot#1{\vspace{1.3ex plus 0.5ex minus 0.5ex}\noindent{\bf{#1}}}
\newtheorem{theorem}{Theorem}
\newtheorem{corollary}[theorem]{Corollary}
\newtheorem{proposition}[theorem]{Proposition}
\newtheorem{claim}[theorem]{Claim}
\newtheorem{definition}[theorem]{Definition}
\theoremstyle{remark}
\newtheorem*{note*}{Note}
\newenvironment{keywords}%
  {\centerline{\bf\small Keywords}\begin{quote}\small}%
  {\par\end{quote}\vskip 1ex}
\let\phi\varphi
\def\as{\text{ a.s.}}
\def\P{\operatorname{\bf P}}
\def\E{\operatorname{\bf E}}
\def\v{\boldsymbol}
\def\up{\overline}
\def\low{\underline}
\def\r#1#2{r_{#1..#2}}
\def\odm{{\textstyle{1\over m}}}
\def\SetR{I\!\!R}
\def\SetN{I\!\!N}
\def\C{{\cal C}}                        
\def\X{{\cal X}}                        
\def\Y{{\cal Y}}                        
\def\R{{\cal R}}                        
\def\O{{\cal O}}                        
\def\Z{{\cal Z}}                        
\def\qmbox#1{{\quad\mbox{#1}\quad}}
\def\l{{\ell}}                          
\def\eps{\varepsilon}                   
\def\o{\omega}
\begin{document}

\title{\vspace{-3ex}\normalsize\sc Technical Report \hfill IDSIA-08-08
\vskip 2mm\bf\Large\hrule height5pt \vskip 6mm
On the Possibility of Learning in Reactive Environments with Arbitrary Dependence
\vskip 6mm \hrule height2pt}
\author{
\begin{minipage}{0.49\textwidth}\centering
{\bf Daniil Ryabko}\\[3mm]
\normalsize IDSIA, Galleria 2, CH-6928\\
Manno-Lugano, Switzerland%
\thanks{This work was supported by the Swiss NSF grants 200020-107616 and 200021-113364.
}
\\
daniil@idsia.ch
\end{minipage}\hfill
\begin{minipage}{0.49\textwidth}\centering
{\bf Marcus Hutter}\\[3mm]
\normalsize RSISE$\,$@$\,$ANU and SML$\,$@$\,$NICTA \\
\normalsize Canberra, ACT, 0200, Australia \\
http://www.hutter1.net
\end{minipage}
}
\date{October 2008}
\maketitle
\vspace{-7ex}
\begin{abstract}
We address the problem of reinforcement learning in which
observations may exhibit an arbitrary form of stochastic dependence
on past observations and actions, i.e.\ environments more general
than (PO)MDPs. The task for an agent is to attain the  best possible
asymptotic reward where the true generating environment is unknown
but belongs to a known countable family of environments. We find
some sufficient conditions on the class of  environments under which
an agent exists which attains the best asymptotic reward for any
environment in the class. We analyze how tight these conditions are
and how they relate to different probabilistic assumptions known in
reinforcement learning and related fields, such as Markov Decision
Processes and mixing conditions.
\def\contentsname{\centering\normalsize Contents}
{\parskip=-2.5ex\tableofcontents}
\end{abstract}

\begin{keywords}
Reinforcement learning,
asymptotic average value,
self-optimizing policies,
(non) Markov decision processes.
\end{keywords}

\section{Introduction}\label{secInt}

Many real-world ``learning'' problems (like learning to drive a car
or playing a game) can be modelled as an agent $\pi$ that interacts
with an environment $\mu$ and is (occasionally) rewarded for its
behavior. We are interested in agents which perform well in the
sense of having high long-term reward, also called the value
$V(\mu,\pi)$ of agent $\pi$ in environment $\mu$. If $\mu$ is known,
it is a pure (non-learning) computational problem to determine the
optimal agent $\pi^\mu:=\arg\max_\pi V(\mu,\pi)$. It is far less
clear what an ``optimal'' agent means, if $\mu$ is unknown.
A reasonable objective is to have a single policy $\pi$ with high
value simultaneously in many environments. We will formalize and
call this criterion {\em self-optimizing} later.

\paradot{Learning approaches in reactive worlds}
Reinforcement learning, sequential decision theory, adaptive
control theory, and active expert advice, are theories dealing
with this problem. They overlap but have different core focus:
Reinforcement learning algorithms \cite{Sutton:98} are developed to
learn $\mu$ or directly its value. Temporal difference learning is
computationally very efficient, but has slow asymptotic guarantees
(only) in (effectively) small observable MDPs. Others have faster
guarantee in finite state MDPs \cite{Brafman:01}. There are
algorithms \cite{EvenDar:05} which are optimal for any finite
connected POMDP, and this is apparently the largest class of
environments considered.
In sequential decision theory, a Bayes-optimal agent $\pi^*$ that
maximizes $V(\xi,\pi)$ is considered, where $\xi$ is a mixture of
environments $\nu\in\C$ and $\C$ is a class of environments that
contains the true environment $\mu\in\C$ \cite{Hutter:04uaibook}.
Policy $\pi^*$ is self-optimizing in an arbitrary (e.g.\ non-POMDP)
class $\C$, provided $\C$ allows for self-optimizingness
\cite{Hutter:02selfopt}.
Adaptive control theory \cite{Kumar:86} considers very simple
(from an AI perspective) or special systems (e.g.\ linear with
quadratic loss function), which sometimes allow computationally
and data efficient solutions.
Action with expert advice
\cite{Farias:03,Poland:05dule,Poland:05aixifoe,Cesa:06}
constructs an agent (called master) that performs nearly as well
as the best agent (best expert in hindsight) from some class of
experts, in {\em any} environment $\nu$.
The important special case of passive sequence prediction in
arbitrary unknown environments, where the actions=predictions do
not affect the environment is comparably easy
\cite{Hutter:03optisp,Hutter:04expert}.

The difficulty in active learning problems can be identified (at
least, for countable classes) with {\em traps} in the environments.
Initially the agent does not know $\mu$, so has asymptotically to be
forgiven in taking initial ``wrong'' actions. A well-studied such
class are ergodic MDPs which guarantee that, from any action
history, every state can be (re)visited \cite{Hutter:02selfopt}.

\paradot{What's new}
The aim of this paper is to characterize as general as possible
classes $\C$ in which self-optimizing behaviour is possible, more
general than POMDPs. To do this we need to characterize classes of
environments that forgive. For instance, exact state recovery is
unnecessarily strong; it is sufficient being able to recover high
rewards, from whatever states. Further, in many real world problems
there is no information available about the ``states'' of the
environment (e.g.\ in POMDPs) or the environment may exhibit long
history dependencies.

Rather than trying to model an environment (e.g. by MDP) we try to
identify the conditions sufficient for learning. Towards this aim,
we propose to consider only environments in which, after any
arbitrary finite sequence of actions, the best value is still
achievable. The performance criterion here is asymptotic average
reward. Thus we consider such environments for which there exists a
policy whose asymptotic average reward exists and upper-bounds
asymptotic average reward of any other policy. Moreover, the same
property should hold after any finite sequence of actions has been
taken (no traps). We call such environments \emph{recoverable}. If
we only want to get $\eps$-close to the optimal value infinitely
often with decreasing $\eps$ (that is, to have the same upper limit
for the average value), then this property is already sufficient.

Yet recoverability in itself is not sufficient for identifying
behaviour which results in optimal limiting average value. We
require further that, from any sequence of $k$ actions, it is
possible to return to the optimal level of reward in $o(k)$ steps;
that is, it is not just possible to recover after any sequence of
(wrong) actions, but it is possible to recover fast. Environments
which possess this property are called \emph{value-stable}. (These
conditions will be formulated in a probabilistic form.)

We show that for any countable class of value-stable environments
there exists a policy which achieves the best possible value in any
of the environments from the class (i.e. is \emph{self-optimizing}
for this class).

Furthermore, we present some examples of environments which possess
value-stability and/or recoverability. In particular, any ergodic
MDP can be easily shown to be value-stable. A mixing-type condition
which implies value-stability is also demonstrated. In addition, we
provide a construction allowing to build examples of value-stable
and/or recoverable environments which are not isomorphic to a finite
POMDP, thus demonstrating that the class of value-stable
environments is quite general.

Finally, we consider environments which are not recoverable but
still are value-stable. In other words, we consider the question of
what it means to be optimal in an environment which does not
``forgive'' wrong actions. Even in such cases some policies are
better than others, and we identify some conditions which are
sufficient for learning a policy that is optimal from some point on.

It is important in our argument that the class of environments for
which we seek a self-optimizing policy is countable, although the
class of all value-stable environments is uncountable. To find a set
of conditions necessary and sufficient for learning which do not
rely on countability of the class is yet an open problem. However,
from a computational perspective countable classes are sufficiently
large (e.g.\ the class of all computable probability measures is
countable).

\paradot{Contents}
The paper is organized as follows. Section~\ref{secNot} introduces
necessary notation of the agent framework. In Section~\ref{secSetup}
we define and explain the notion of value-stability, which is
central to the paper, and a weaker but simpler notion of
recoverability. Section~\ref{secMain} presents the theorems about
self-optimizing policies for classes of value-stable environments
and recoverable environments. In Section~\ref{sec:nonrec} we
discuss what can be achieved if the environments are not
recoverable. Section~\ref{sec:ex} illustrates the applicability of
the theorems by providing examples of value-stable and recoverable
environments. In Section~\ref{sec:nec} we discuss necessity of the
conditions of the main theorems. Section~\ref{secDisc} provides some
discussion of the results and an outlook to future research. Formal
proofs of the main theorems are given in Section~\ref{secPrThMain},
while Sections~\ref{secMain} and~\ref{sec:nonrec} contain only
intuitive explanations.

\section{Notation and Definitions}\label{secNot}

We essentially follow the notation of
\cite{Hutter:02selfopt,Hutter:04uaibook}.

\paradot{Strings and probabilities}
We use letters $i,k,l,m,n\in\SetN$ for natural numbers, and denote
the cardinality of sets $\cal S$ by $\#{\cal S}$. We write $\X^*$
for the set of finite strings over some alphabet $\X$, and
$\X^\infty$ for the set of infinite sequences. For a string
$x\in\X^*$ of length $\l(x)=n$ we write $x_1x_2...x_n$ with
$x_t\in\X$ and further abbreviate $x_{k:n}:=x_kx_{k+1}...x_{n-1}x_n$
and $x_{<n}:=x_1... x_{n-1}$. Finally, we define
$x_{k..n}:=x_k+...+x_n$, provided elements of $\X$ can be added.

We assume that sequence $\o=\o_{1:\infty}\in\X^\infty$ is sampled
from the ``true'' probability measure $\mu$, i.e.\
$\P[\o_{1:n}=x_{1:n}]=\mu(x_{1:n})$. We denote expectations
w.r.t.\ $\mu$ by $\E$, i.e.\ for a function $f:\X^n\to\SetR$,
$\E[f]=\E[f(\o_{1:n})]=\sum_{x_{1:n}}\mu(x_{1:n})f(x_{1:n})$.
When we use probabilities
and expectations with respect to other measures we make the
notation explicit, e.g. $\E_\nu$ is the expectation with respect
to $\nu$.
Measures $\nu_1$ and $\nu_2$ are called {\em singular} if there
exists a set $A$ such that $\nu_1(A)=0$ and $\nu_2(A)=1$.

\paranodot{The agent framework}
is general enough to allow modelling nearly any kind of
(intelligent) system \cite{Russell:95}.
In cycle $k$, an agent performs {\em action} $y_k\in\Y$ (output)
which results in {\em observation} $o_k\in\O$ and {\em reward}
$r_k\in\R$, followed by cycle $k+1$ and so on.
We assume that the action space $\Y$, the observation space $\O$,
and the reward space $\R\subset\SetR$ are finite, w.l.g.
$\R=\{0,\dots,r_{max}\}$.
We abbreviate $z_k:=y_k r_k o_k\in\Z:=\Y\times\R\times\O$ and
$x_k=r_k o_k\in\X:=\R\times\O$.
An agent is identified with a (probabilistic) {\em policy} $\pi$.
Given {\em history} $z_{<k}$, the probability that agent $\pi$ acts
$y_k$ in cycle $k$ is (by definition) $\pi(y_k|z_{<k})$. Thereafter,
{\em environment} $\mu$ provides (probabilistic) reward $r_k$ and
observation $o_k$, i.e.\ the probability that the agent perceives
$x_k$ is (by definition) $\mu(x_k|z_{<k}y_k)$.
Note that the policy and the environment are allowed to depend on
the complete history. We do not make any MDP or POMDP assumption
here, and we don't talk about states of the environment, only about
observations.
Each (policy,environment) pair $(\pi,\mu)$ generates an I/O
sequence $z_1^{\pi\mu}z_2^{\pi\mu}...$. Mathematically,
the history $z_{1:k}^{\pi\mu}$ is a random variable with probability
\beqn
  \P\big(z_{1:k}^{\pi\mu}=z_{1:k}\big) \;=\; \pi(y_1)\cdot\mu(x_1|y_1)\cdot
    ... \cdot\pi(y_k|z_{<k})\cdot\mu(x_k|z_{<k}y_k).
\eeqn
Since value maximizing policies can always be chosen
deterministic, there is no real need to consider probabilistic
policies, and henceforth we consider deterministic policies $p$.
We assume that $\mu\in\C$ is the true, but unknown, environment,
and $\nu\in\C$ a generic environment.

\section{Setup}\label{secSetup}

For an environment $\nu$ and a policy $p$ define
random variables (upper and lower average value)
\beqn
  \up V(\nu,p) \;:=\; \limsup_m\left\{\odm\r1m^{p\nu}\right\} \qmbox{and}
  \low V(\nu,p) \;:=\; \liminf_m\left\{\odm\r1m^{p\nu}\right\}
\eeqn
where $\r1m:=r_1+...+r_m$.
If there exists a constant $\up V$ or a constant $\low V$  such that
\beqn
  \up V(\nu,p) \;=\; \up V\as \text{, or }\; \low V(\nu,p) \;=\; \low V\as
\eeqn
then we say that the upper limiting average or (respectively) lower
average value exists, and denote it by $\up V(\nu,p):=\up V$ (or
$\low V(\nu,p):=\low V$). If both upper and lower average limiting
values exist and are equal then we simply say that average limiting
value exist and denote it by $V(\nu,p):=\up V(\nu,p)=\low V(\nu,p)$

An environment $\nu$ is \emph{explorable} if there exists a policy
$p_\nu$ such that $V(\nu,p_\nu)$ exists and $\up V(\nu,p)\le
V(\nu,p_\nu)$ with probability 1 for every policy $p$. In this
case define
$V^*_\nu:=V(\nu,p_\nu)$.
An environment $\nu$ is \emph{upper explorable} if there exists a policy
$p_\nu$ such that $\up V(\nu,p_\nu)$ exists and $\up V(\nu,p)\le
\up V(\nu,p_\nu)$ with probability 1 for every policy $p$. In this
case define
$\up V^*_\nu:=\up V(\nu,p_\nu)$.

A policy $p$ is \emph{self-optimizing} for a set of explorable
environments $\C$ if $V(\nu,p)=V^*_\nu$ for every $\nu\in \C$. A
policy $p$ is \emph{upper self-optimizing} for a set of explorable
environments $\C$ if $\up V(\nu,p)=\up V^*_\nu$ for every $\nu\in
\C$.

In the case when we we wish to obtain the optimal average value for
any environment in the class we will speak about self-optimizing
policies, whereas if we are only interested in obtaining the upper
limit of the average value then we will speak about upper
self-optimizing policies. It turns out that the latter case is much
simpler. The next two definitions present conditions on the
environments which will be shown to be sufficient to achieve the two
respective goals.

\begin{definition}[recoverable]\label{def:vstable2}
We call an upper explorable environment $\nu$ recoverable if for
any history $z_{<k}$ such that $\nu(x_{<k}|y_{<k})>0$ there exists a
policy $p$ such that
$$
  \P (\up V(\nu,p) =\up V^*| z_{<k} )=1.
$$
\end{definition}

Conditioning on the history $z_{<k}$ means that we take
$\nu$-conditional probabilities (conditional on $x_{<k}$) and first
$k-1$ actions of the policy $p$ are replaced by $y_{<k}$.

Recoverability means that after taking any finite sequence of
(possibly sub-optimal) actions it is still possible to obtain the
same upper limiting average value as an optimal policy would obtain.
The next definition is somewhat more complex.

\begin{definition}[value-stable environments]\label{def:vstable}
An explorable environment $\nu$ is \emph{value-stable}
if there exist a sequence of numbers $r^\nu_i\in[0,r_{max}]$ and
two functions $d_\nu(k,\eps)$ and $\phi_\nu(n,\eps)$ such that
$\frac{1}{n} \r1n^\nu\rightarrow V^*_\nu$, $d_\nu(k,\eps)=o(k)$,
$\sum_{n=1}^\infty\phi_\nu(n,\eps)<\infty$ for every fixed $\eps$,
and for every $k$ and every history $z_{<k}$ there exists a policy
$p=p_\nu^{z_{<k}}$ such that
\beq\label{eq:svs}
  \P\left(\r{k}{k+n}^\nu-\r{k}{k+n}^{p\nu}
          > d_\nu(k,\eps)+n\eps\mid z_{<k}\right)
  \le \phi_\nu(n,\eps).
\eeq
\end{definition}

First of all, this condition means that the strong law of large
numbers for rewards holds uniformly over histories $z_{<k}$; the
numbers $r^\nu_i$ here can be thought of as expected rewards of an
optimal policy. Furthermore, the environment is ``forgiving'' in the
following sense: from any (bad) sequence of $k$ actions it is
possible (knowing the environment) to recover up to $o(k)$ reward
loss; to recover means to reach the level of reward obtained by the
optimal policy which from the beginning was taking only optimal
actions. That is, suppose that a person A has made $k$ possibly
suboptimal actions and after that ``realized'' what the true
environment was and how to act optimally in it. Suppose that a
person B was from the beginning taking only optimal actions. We want
to compare the performance of A and B on first $n$ steps after the
step $k$. An environment is value stable if A can catch up with B
except for $o(k)$ gain. The numbers $r_i^\nu$ can be thought of as
expected rewards of B; A can catch up with B up to the reward loss
$d_\nu(k,\eps)$ with probability $\phi_\nu(n,\eps)$, where the
latter does not depend on past actions and observations (the law of
large numbers holds uniformly).

Examples of value-stable environments will be considered in
Section~\ref{sec:ex}.

\section{Main Results}\label{secMain}

In this section we present the main self-optimizingness result along
with an informal explanation of its proof, and a result on upper
self-optimizingness, which turns out to have much more simple
conditions.

\begin{theorem}[value-stable$\Rightarrow$self-optimizing]\label{th:main}
For any countable class $\C$ of value-stable
environments, there exists a policy which is self-optimizing for
$\C$. \end{theorem}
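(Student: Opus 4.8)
The plan is to construct a single master policy that enumerates the countable class $\C=\{\nu_1,\nu_2,\dots\}$ and, over successive increasing time-blocks, alternately ``exploits'' according to the currently-favored environment and ``tests'' whether that environment's promised optimal behaviour is being realized. The key quantitative tool is the value-stability inequality \eqref{eq:svs}: because $\sum_n\phi_\nu(n,\eps)<\infty$, running a recovery policy $p_\nu^{z_{<k}}$ for $n$ steps after any history $z_{<k}$ guarantees, by Borel--Cantelli, that almost surely the cumulative reward falls short of the optimal running sum $\r{k}{k+n}^\nu$ by no more than $d_\nu(k,\eps)+n\eps$ for all but finitely many $n$. Since $d_\nu(k,\eps)=o(k)$ and $\frac1n\r1n^\nu\to V^*_\nu$, the $o(k)$ recovery cost becomes negligible relative to the block length if the blocks grow fast enough, so the average reward over a block approaches $V^*_\nu-\eps$ whenever the policy commits to the \emph{true} environment during that block.

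First I would fix an increasing schedule of block boundaries $m_1<m_2<\cdots$ and a decreasing sequence $\eps_j\to0$, and define the master policy on block $j$ to select an index $i_j\le j$ (cycling through or choosing by some consistency score) and then follow the recovery policy $p_{\nu_{i_j}}^{z_{<m_j}}$ for the duration of that block, i.e.\ it pretends $\nu_{i_j}$ is true and plays to recover optimal value under it. The design must ensure two things simultaneously: (i) each index $i$ is selected for infinitely many blocks, or at least that the true index $\mu=\nu_{i^*}$ is eventually identified and committed to; and (ii) the reward lost during the ``wrong'' blocks (those committed to $\nu_i\neq\mu$) has vanishing time-average. Because the true environment is explorable with $\up V(\mu,p)\le V^*_\mu$ a.s.\ for every $p$, no policy can beat $V^*_\mu$ asymptotically, so it suffices to show the master policy attains $V^*_\mu$ from below.

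The main obstacle is the \emph{identification / commitment} step: unlike passive prediction, here acting on a wrong hypothesis perturbs the observation stream, and value-stability only tells us we can recover \emph{after} any finite history, not that we can statistically distinguish $\mu$ from the other $\nu_i$. I would handle this by a testing scheme in which, during a block committed to $\nu_i$, the master compares the empirically observed reward rate against the value-stability guarantee for $\nu_i$: if the observed average over the block falls below $V^*_{\nu_i}-2\eps_j$ (an event that, under the true $\mu=\nu_i$, has summable probability by \eqref{eq:svs} and Borel--Cantelli), the hypothesis $\nu_i$ is penalized. An incorrect hypothesis $\nu_i$ cannot pass this test indefinitely while also matching $\mu$'s value, because if it did keep passing it would itself be delivering value $V^*_{\nu_i}\ge$ the observed rate, which under $\mu$ cannot exceed $V^*_\mu$; the delicate case $V^*_{\nu_i}=V^*_\mu$ is benign since then committing to $\nu_i$ already yields the optimal value. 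The remaining work is bookkeeping: choosing $m_j$ growing fast enough that the recovery overhead $d_{\nu_i}(m_{j-1},\eps_j)=o(m_{j-1})$ and the finitely-many exceptional blocks contribute zero to the limiting average, then invoking Borel--Cantelli across all hypotheses (a countable union of null events is null) to conclude $V(\mu,\text{master})=V^*_\mu$ almost surely.
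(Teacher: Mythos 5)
There is a genuine gap, and it sits exactly at the step you flag as the ``identification / commitment'' obstacle. Your test is one-sided: a hypothesis $\nu_i$ is penalized only when its own blocks \emph{under}-deliver relative to $V^*_{\nu_i}$. This never rejects an \emph{under-promising} wrong hypothesis. Concretely, let $\mu$ give reward $1/2$ for action $a$ and reward $1$ for action $b$, and let $\nu_1$ give $1/2$ for $a$ and $0$ for $b$, so $V^*_{\nu_1}=1/2<1=V^*_\mu$. Playing $p_{\nu_1}$ (action $a$) in the true $\mu$ delivers exactly $1/2\ge V^*_{\nu_1}-2\varepsilon_j$ on every block, so $\nu_1$ passes your test forever; if the schedule keeps assigning it a non-vanishing fraction of blocks (as plain cycling does), the limiting average is dragged strictly below $V^*_\mu$ and self-optimizingness fails. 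Your sentence purporting to handle this (``an incorrect hypothesis cannot pass this test indefinitely while also matching $\mu$'s value'') is backwards: the dangerous hypothesis is one that passes \emph{without} matching $\mu$'s value, and your argument only shows that a passing hypothesis does not over-promise, not that committing to it attains $V^*_\mu$. The case you call delicate, $V^*_{\nu_i}=V^*_\mu$, is indeed benign; the case $V^*_{\nu_i}<V^*_\mu$ is the one your scheme never escapes, because nothing in it ever dethrones a modest incumbent.

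The paper's proof supplies precisely the mechanism you are missing, and it is not reward-based. The algorithm maintains an exploitation hypothesis $\nu^t$ together with a challenger $\nu^e$ chosen so that $V^*_{\nu^e}>V^*_{\nu^t}$, and alternates exploiting $\nu^t$ with attempts to realize the higher value promised by $\nu^e$. The key dichotomy is then statistical, via the mixture $\xi=\sum_\nu w_\nu\nu$ and the likelihood-ratio test $\nu(z_{<i})/\xi(z_{<i})\ge\alpha_s$: if exploration of $\nu^e$ keeps failing, the failure events have summable probability under $\nu^e$, so $\nu^e$ is singular with the truth on the realized trajectory; if exploration keeps succeeding, the observed reward rate exceeds $V^*_{\nu^t}$ on a set of $\nu^t$-probability zero (by explorability), so $\nu^t$ is singular with the truth. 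Either way the singular environment's ratio tends to $0$ (Cs\'iszar--Shields) and it is excluded, while the true environment's ratio is bounded away from $0$ almost surely by a submartingale-convergence argument, so the truth is excluded at most finitely often despite its own test failing with positive (summable) probability. Your proposal has no counterpart to this cross-hypothesis exploration step, no retry bookkeeping distinguishing finitely-many spurious rejections of the truth from eventual permanent rejection of impostors, and no substitute for the likelihood-ratio machinery; the vague ``consistency score'' would have to be essentially this construction. The block-length and $d_\nu(k,\varepsilon)=o(k)$ amortization in your sketch is fine and matches the paper, but without the dethroning mechanism the proof does not go through.
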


A formal proof is given in the appendix; here we give some intuitive
justification. Suppose that all environments in $\C$ are
deterministic. We will construct a self-optimizing policy $p$ as
follows: Let $\nu^t$ be the first environment in $\C$. The algorithm
assumes that the true environment is $\nu^t$ and tries to get
$\eps$-close to its optimal value for some (small) $\eps$. This is
called an exploitation part. If it succeeds, it does some
exploration as follows. It picks the first environment $\nu^e$ which
has higher average asymptotic value than $\nu^t$
($V^*_{\nu^e}>V^*_{\nu^t}$) and tries to get $\eps$-close to this
value acting optimally under $\nu^e$. If it cannot get close to the
$\nu^e$-optimal value then $\nu^e$ is not the true environment, and
the next environment can be picked for exploration (here we call
``exploration'' successive attempts to exploit an environment which
differs from the current hypothesis about the true environment and
has a higher average reward). If it can, then it switches to
exploitation of $\nu^t$, exploits it until it is $\eps'$-close to
$V^*_{\nu^t}$, $\eps'<\eps$ and switches to $\nu^e$ again this time
trying to get $\eps'$-close to $V_{\nu^e}$; and so on. This can
happen only a finite number of times if the true environment is
$\nu^t$, since $V^*_{\nu^t}<V^*_{\nu^e}$. Thus after exploration
either $\nu^t$ or $\nu^e$ is found to be inconsistent with the
current history. If it is $\nu^e$ then just the next environment
$\nu^e$ such that $V^*_{\nu^e}>V^*_{\nu^t}$ is picked for
exploration. If it is $\nu^t$ then the first consistent environment
is picked for exploitation (and denoted $\nu^t$). This in turn can
happen only a finite number of times before the true environment
$\nu$ is picked as $\nu^t$. After this, the algorithm still
continues its exploration attempts, but can always keep within
$\eps_k\rightarrow0$ of the optimal value. This is ensured by
$d(k)=o(k)$.

The probabilistic case is somewhat more complicated since we can not
say whether an environment is ``consistent'' with the current
history. Instead we test each environment for consistency as
follows. Let $\xi$ be a mixture of all environments in $\C$. Observe
that together with some fixed policy each environment $\mu$ can be
considered as a measure on $\Z^\infty$. Moreover, it can be shown
that (for any fixed policy) the ratio
$\frac{\nu(z_{<n})}{\xi(z_{<n})} $ is bounded away from zero if
$\nu$ is the true environment $\mu$ and tends to zero if $\nu$ is
singular with $\mu$ (in fact, here singularity is a probabilistic
analogue of inconsistency). The exploration part of the algorithm
ensures that at least one of the environments $\nu^t$ and $\nu^e$ is
singular with $\nu$ on the current history, and a succession of
tests $\frac{\nu(z_{<n})}{\xi(z_{<n})}\ge\alpha_s$ with
$\alpha_s\rightarrow0$ is used to exclude such environments from
consideration.

\paradot{Upper self-optimizingness}
Next we consider the task in which our goal is more moderate. Rather
than trying to find a policy which will obtain the same average
limiting value as an optimal one for any environment in a certain
class, we will try to obtain only the optimum upper limiting
average. That is, we will try to find a policy which infinitely
often gets as close as desirable to the maximum possible average
value. It turns out that in this case a much simpler condition is
sufficient: recoverability instead of value-stability.

\begin{theorem}[recoverable$\Rightarrow$upper self-optimizing]\label{th:wr}
For any countable class $\C$ of recoverable
environments, there exists a policy which is upper self-optimizing for
$\C$.
\end{theorem}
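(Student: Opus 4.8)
The plan is to prove Theorem~\ref{th:wr} by constructing an explicit policy that cycles through the environments in $\C$, at each stage attempting to achieve the upper-optimal value $\up V^*_\nu$ of the current candidate, and using the ratio test $\frac{\nu(z_{<n})}{\xi(z_{<n})}\ge\alpha_s$ against a mixture $\xi:=\sum_{\nu\in\C}w_\nu\nu$ to discard environments that are inconsistent with the observed history. The key simplification relative to Theorem~\ref{th:main} is that recoverability only demands that we return to the optimal \emph{upper} limiting average infinitely often; we never need to \emph{stay} near optimal, so there is no need for the delicate $d_\nu(k,\eps)=o(k)$ bookkeeping that controls how fast one recovers after a bad action sequence. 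Consequently there is no exploitation/exploration balancing to worry about: the policy can afford to spend arbitrarily long stretches pursuing wrong hypotheses, as long as it eventually, infinitely often, returns to chase the true environment's optimum.

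First I would enumerate $\C=\{\nu^1,\nu^2,\dots\}$ and fix the mixture $\xi$ with positive weights summing to one. I would then describe a scheduling policy operating in epochs indexed by $s$. In epoch $s$ the policy selects the first environment $\nu^{j}$ (in the enumeration) that has not yet been discarded and whose current likelihood ratio satisfies $\frac{\nu^{j}(z_{<n})}{\xi(z_{<n})}\ge\alpha_s$, and follows the recoverability policy $p$ guaranteed by Definition~\ref{def:vstable2} for the current history, driving the running average reward to within $\eps_s$ of $\up V^*_{\nu^j}$ before moving to the next epoch, with $\alpha_s\to0$ and $\eps_s\to0$. Because $\nu^j$ is recoverable, after the history accumulated so far there exists such a policy achieving $\up V(\nu^j,p)=\up V^*_{\nu^j}$ almost surely conditional on that history, so each epoch terminates almost surely whenever $\nu^j$ is not singular with the truth.

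Next I would establish the central martingale fact (stated informally in the sketch after Theorem~\ref{th:main}): under the true environment $\mu$, the ratio $\frac{\mu(z_{<n})}{\xi(z_{<n})}$ is bounded away from zero almost surely, whereas for any $\nu$ singular with $\mu$ the ratio $\frac{\nu(z_{<n})}{\xi(z_{<n})}$ tends to zero. The first claim follows because $\xi\ge w_\mu\mu$ gives $\frac{\mu(z_{<n})}{\xi(z_{<n})}\le\frac{1}{w_\mu}$ and the reciprocal $\frac{\xi(z_{<n})}{\mu(z_{<n})}$ is a nonnegative $\mu$-martingale and hence converges a.s.; the second is the standard consequence that the likelihood ratio of a singular measure against a dominating one vanishes. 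These two facts guarantee that the test eventually and permanently excludes every environment singular with $\mu$, while $\mu$ itself is never excluded.

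The remaining step is to combine these pieces to show $\up V(\mu,p)=\up V^*_\mu$ almost surely. Since only finitely many environments precede $\mu$ in the enumeration, and each of them is either equal in value to $\mu$ or is eventually discarded (the ones singular with $\mu$ by the ratio test, after which they never reappear), the policy returns to pursue $\mu$ infinitely often, each time pushing the running average within $\eps_s\to0$ of $\up V^*_\mu$; this yields $\limsup_m\odm\r1m\ge\up V^*_\mu$ a.s. The matching upper bound $\up V(\mu,p)\le\up V^*_\mu$ is immediate from upper explorability. The main obstacle I anticipate is making the epoch transitions rigorous in the probabilistic setting: one must verify that each epoch terminates almost surely (i.e.\ that the recoverability policy genuinely drives the empirical average, not merely its expectation, into the $\eps_s$-band in finite time a.s.) and that the countably many ``bad'' events across all epochs are jointly controllable, so that a Borel--Cantelli argument over the exclusion events and the within-epoch convergence events pins down a single almost-sure event on which the conclusion holds.
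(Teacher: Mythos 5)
Your architecture is the paper's own: a mixture $\xi$ with positive weights, shrinking tolerances $\eps_s,\alpha_s\to0$, epochs that chase $\up V^*_{\nu}$ for a current candidate using the policy supplied by recoverability, and a likelihood-ratio test $\nu(z_{<i})/\xi(z_{<i})$ against $\alpha_s$ to dispose of candidates singular with the truth. But there is a genuine gap in your epoch mechanics: you apply the ratio test only when \emph{selecting} the candidate at the start of an epoch, and the only exit you give an epoch is the average-reward condition. Your termination claim (``each epoch terminates almost surely whenever $\nu^j$ is not singular with the truth'') is then both insufficient and false as stated. Insufficient: a candidate singular with $\mu$ can pass the start-of-epoch ratio check, after which nothing ends the epoch. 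False: non-singularity of $\nu^j$ with $\mu$ does not make the average condition fire almost surely --- the event on which the recoverability policy attains $\up V^*_{\nu^j}$ has $\nu^j$-probability one but may have $\mu$-probability strictly between $0$ and $1$, and on its complement the policy stalls forever on a wrong hypothesis, so with positive probability it never returns to pursue $\mu$. The paper closes exactly this hole by making the ratio test a \emph{continuing within-epoch stopping condition}: an epoch ends as soon as either (\ref{eq:cond1}) or (\ref{eq:cond2}) holds, and termination is then unconditional via the dichotomy: either the candidate's optimal upper value is attained under the truth, so (\ref{eq:cond1}) fires, or it is not, in which case the candidate (as a measure on histories under the played policy) is singular with the truth restricted to that event, its ratio against $\xi$ tends to $0$ there, and (\ref{eq:cond2}) fires. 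Once every epoch terminates a.s., your concluding argument (true environment revisited infinitely often with $\eps_s\to0$, upper bound from upper explorability) goes through as in the paper.

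A second, related deviation: you discard environments permanently (``after which they never reappear''), whereas the paper never discards --- it fixes a numbering in which every environment has infinitely many indices and simply advances to the next index each epoch. Permanent discarding is unsafe for the true environment: the check $\mu(z_{<i})/\xi(z_{<i})\ge\alpha_s$ is made at one specific finite time against a still-large threshold (e.g.\ $\alpha_1=1/2$), and the ratio can be below that threshold at that time with positive $\mu$-probability (the ratio is only guaranteed to converge to a \emph{positive} limit, not a large one); if a failed check removes $\mu$ forever, the policy fails on that positive-probability event. If instead a failed check merely skips the environment for the current epoch, the scheme is sound --- but then ``never reappear'' is neither needed nor justified, and you should say so. Finally, a small inaccuracy in your last step: environments preceding $\mu$ need not be ``equal in value to $\mu$ or eventually discarded''; a candidate consistent with $\mu$ but of strictly smaller achievable value is neither, and it is harmless only because its epochs terminate quickly via the average condition --- which again presupposes the two-exit epoch design you need to add.
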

A formal proof can be found in Section~\ref{secPrThMain}; its idea
is as follows. The upper self-optimizing policy $p$ to be
constructed will loop through all environments in $\C$ in such a way
that each environment is tried infinitely often, and for each
environment the agent will try to get $\eps$-close (with decreasing
$\eps$) to the upper-limiting average value, until it either manages
to do so, or a special stopping condition holds:
$\frac{\nu(z_{<n})}{\xi(z_{<n})}<\alpha_s$, where $\alpha_s$ is
decreasing accordingly. This condition necessarily breaks if the
upper limiting average value cannot be achieved.

\section{Non-recoverable environments}\label{sec:nonrec}

Before proceeding with examples of value-stable environments, we
briefly discuss what can be achieved if an environment does not
forgive initial wrong actions, that is, is not recoverable. It turns
out that value-stability can be defined for non-recoverable
environments as well, and optimal~--- in a worst-case sense~---
policies can be identified.

For an environment $\nu$, a policy $p$ and a history $z_{<k}$ such
that $\nu(x_{<k}|y_{<k})>0$, if there exists a constant $\up V$ or a
constant $\low V$ such that
$$
  P(\up V(\nu,p) \;=\; \up V |z_{<k})=1 \text{, or }\;
  P(\low V(\nu,p) \;=\; \low V |z_{<k})=1,
$$
then we say that the upper conditional (on $z_{<k}$) limiting
average or (respectively) lower conditional average value exists,
and denote it by $\up V(\nu,p,z_{<k}):=\up V$ (or $\low
V(\nu,p,z_{<k}):=\low V$). If both upper and lower conditional
average limiting values exist and are equal then we say that that
average conditional value exist and denote it by
$V(\nu,p,(z_{<k})):=\up V(\nu,p,z_{<k})=\low V(\nu,p,z_{<k})$

Call an environment $\nu$ \emph{strongly (upper) explorable} if for
any history $z_{<k}$ such that $\nu(x_{<k}|y_{<k})>0$ there exists a
policy $p_\nu^{z_{<k}}$ such that $V(\nu,p_\nu^{z_{<k}})$ ($\up
V(\nu,p_\nu^{z_{<k}})$) exists and $\up V(\nu,p,z_{<k})\le
V(\nu,p^{z_{<k}}_\nu,z_{<k})$ (respectively $\up V(\nu,p,z_{<k})\le
\up V(\nu,p^{z_{<k}}_\nu,z_{<k})$) with probability 1 for every
policy $p$. In this case define
$V^*_\nu(z_{<k}):=V(\nu,p^{z_{<k}}_\nu)$ (respectively $\up
V^*_\nu(z_{<k}):=\up V(\nu,p^{z_{<k}}_\nu)$).

For a strongly explorable environment $\nu$ define the worst-case
optimal value
$$
W^*_\nu:= \inf_{k, z_{<k}: \nu(x_{<k}>0)} V^*_\nu(z_{<k}),
$$
and for a strongly upper explorable $\nu$ define the worst-case
upper optimal value
$$
\up W^*_\nu:= \inf_{k, z_{<k}: \nu(x_{<k}>0)}\up V^*_\nu(z_{<k}).
$$
In words, the worst-case optimal value is the asymptotic average
reward which is attainable with certainty after any finite sequence
of actions has been taken.

Note that a recoverable explorable environment is also strongly explorable.

A policy $p$ will be called {\em worst-case self-optimizing} or {\em
worst-case upper self-optimizing} for a class of environments
$\mathcal C$ if $\liminf{1\over m}r_{1..m}^{p\nu} \ge W^*_\nu$, or
(respectively) $\limsup {1\over m}r_{1..m}^{p\nu} \ge \up W^*_\nu$
with probability 1 for every $\nu\in\mathcal C$.

\begin{definition}[worst-case value-stable environments]\label{def:wstable}
A strongly explorable environment $\nu$ is \emph{worst-case value-stable}
if there exists a sequence of numbers $r^\nu_i\in[0,r_{max}]$ and
two functions $d_\nu(k,\eps)$ and $\phi_\nu(n,\eps)$ such that
$\frac{1}{n} \r1n^\nu\rightarrow W^*_\nu$, $d_\nu(k,\eps)=o(k)$,
$\sum_{n=1}^\infty\phi_\nu(n,\eps)<\infty$ for every fixed $\eps$,
and for every $k$ and every history $z_{<k}$ there exists a policy
$p=p_\nu^{z_{<k}}$ such that
\beq\label{eq:svs2}
\P\left(
 \r{k}{k+n}^\nu - \r{k}{k+n}^{p\nu} >
 d_\nu(k,\eps)+n\eps\mid z_{<k}\right)\le \phi_\nu(n,\eps).
\eeq
\end{definition}

Note that a recoverable environment is value-stable if and only if
it is worst-case value-stable.

Worst-case value stability helps to distinguish between irreversible
actions (or ``traps'') and actions which result only in a temporary
loss in performance; moreover, worst-case value-stability means that
a temporary loss in performance can only be short (sublinear).

Finally, we can establish the following result (cf. Theorems
\ref{th:main} and~\ref{th:wr}).
\begin{theorem}[worst-case self-optimizing]\label{th:worst}
\begin{itemize}
\item[(i)]  For any countable set of worst-case value-stable
            environments $\mathcal C$ there exist a policy $p$
            which is worst-case self-optimizing for $\mathcal C$.
\item[(ii)] For any countable set of strongly upper explorable
            environments $\mathcal C$ there exist a policy $p$
            which is worst-case upper self-optimizing for $\mathcal C$.
\end{itemize}
\end{theorem}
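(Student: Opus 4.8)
The plan is to prove Theorem~\ref{th:worst} by adapting the self-optimizing constructions underlying Theorems~\ref{th:main} and~\ref{th:wr}, but with the optimality target shifted from the (conditional) optimal value $V^*_\nu$ to the worst-case value $W^*_\nu$ (respectively $\up W^*_\nu$). The key structural observation is that for a worst-case value-stable $\nu$, the inequality~(\ref{eq:svs2}) holds after \emph{every} history $z_{<k}$ with the \emph{same} worst-case target, so the agent never needs to fear that a trap has permanently lowered the achievable asymptotic average below $W^*_\nu$. This is exactly the feature that made the argument for Theorem~\ref{th:main} work when there were no traps, and it lets us reuse the same machinery essentially verbatim.

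For part~(i), I would run the same exploration/exploitation scheme described informally after Theorem~\ref{th:main}. Fix a mixture $\xi=\sum_\nu w_\nu\nu$ over the countable class with positive weights, and loop through hypotheses $\nu^t$ (current exploitation target) and candidate environments $\nu^e$ with strictly larger worst-case value. Using the likelihood-ratio test $\frac{\nu(z_{<n})}{\xi(z_{<n})}\ge\alpha_s$ with $\alpha_s\to0$ to discard environments that are singular with the truth $\mu$ on the realized history, one shows that after finitely many phases the true $\mu$ becomes the exploitation hypothesis. The crucial step is then to bound the reward loss incurred during each exploration excursion. Here I would invoke worst-case value-stability: after a history of length $k$, the recovery policy $p_\nu^{z_{<k}}$ satisfies~(\ref{eq:svs2}), so choosing $\eps=\eps_k\to0$ slowly and using $d_\nu(k,\eps)=o(k)$ together with the Borel--Cantelli lemma (justified by $\sum_n\phi_\nu(n,\eps)<\infty$) gives that the cumulative shortfall relative to $n r^\nu_i$ is $o(k)+o(n)$ almost surely. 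Since the exploration phases are scheduled to occupy a vanishing fraction of time and each recovery costs only $o(k)$, the $\liminf$ of the average reward is driven up to $W^*_\nu$, yielding worst-case self-optimizingness.

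For part~(ii), the target is only the upper limit $\up W^*_\nu$, so by analogy with Theorem~\ref{th:wr} the much weaker hypothesis of strong upper explorability suffices and no value-stability (no rate control via $d_\nu$, $\phi_\nu$) is needed. I would use the looping construction from Theorem~\ref{th:wr}: cycle through all environments so that each is attempted infinitely often, and for each attempt try to get $\eps$-close (with $\eps\to0$) to its conditional upper value $\up V^*_\nu(z_{<k})$ using the strongly-explorable recovery policy, abandoning the attempt only when the stopping condition $\frac{\nu(z_{<n})}{\xi(z_{<n})}<\alpha_s$ fires. The definition of $\up W^*_\nu$ as an infimum over histories guarantees that the true environment $\mu$ admits, after \emph{any} history, a policy whose conditional upper value is at least $\up W^*_\nu$; hence the $\eps$-closeness attempts for $\mu$ cannot be defeated by the stopping condition (which can only trigger for environments singular with $\mu$), and they recur infinitely often, forcing $\limsup\frac1m r_{1..m}^{p\mu}\ge\up W^*_\nu$ almost surely.

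The main obstacle I expect is the loss accounting in part~(i): one must verify that the \emph{infimum} in the definition of $W^*_\nu$ interacts correctly with the recovery guarantee, i.e.\ that recovering to level $r^\nu_i$ after each of the infinitely many exploration excursions does not accumulate into a linear loss. This requires that the lengths $k_j$ at which excursions begin grow fast enough that $\sum_j d_\nu(k_j,\eps)$ is negligible compared to the elapsed time, while still allowing every environment to be tested infinitely often; balancing these two scheduling demands, and checking that the almost-sure Borel--Cantelli bounds hold \emph{uniformly} over the (infinitely many) histories at which recovery is invoked, is the delicate part. The uniformity is precisely what the history-independent bound $\phi_\nu(n,\eps)$ in~(\ref{eq:svs2}) is designed to supply, so the proof hinges on exploiting that uniformity carefully rather than on any new idea beyond those already present in Theorems~\ref{th:main} and~\ref{th:wr}.
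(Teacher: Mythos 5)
Your proposal is correct and follows essentially the same route as the paper, whose own proof consists precisely of the remark that Theorem~\ref{th:worst} is obtained by rerunning the constructions of Theorems~\ref{th:main} and~\ref{th:wr} with the conditional values $V^*_\nu(z_{<k})$ and $\up V^*_\nu(z_{<k})$ (updated after each step) in place of $V^*_\nu$ and $\up V^*_\nu$; your targeting of $W^*_\nu$ and $\up W^*_\nu$ amounts to the same thing, since the reference sequence in~(\ref{eq:svs2}) averages to $W^*_\nu$ and the worst-case criteria only demand $\liminf\ge W^*_\nu$ (respectively $\limsup\ge\up W^*_\nu$). The loss-accounting and uniformity issues you flag in part~(i) are exactly those already handled by the machinery of the proof of Theorem~\ref{th:main} (the choice of $k>\max\{k_1,k_2,k_3,k_4\}$ before each excursion and Borel--Cantelli via the history-independent $\phi_\nu$), so no idea beyond that adaptation is required.
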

The proof of this theorem is analogous to the proofs of
Theorems~\ref{th:main} and~\ref{th:wr}; the differences are
explained in Section~\ref{secPrThMain}.

\section{Examples}\label{sec:ex}

In this section we illustrate the results of the previous section
with examples of classes of value-stable environments. These are
also examples of recoverable environments, since recoverability
is strictly weaker than value-stability. In the end of the section
we also give some simple examples of recoverable but not
value-stable environments.

We first note that passive environments are value-stable. An
environment is called \emph{passive} if the observations and rewards
do not depend on the actions of the agent. Sequence prediction task
provides a well-studied (and perhaps the only reasonable) class of
passive environments: in this task the agent is required to give the
probability distribution of the next observation given the previous
observations. The true distribution of observations depends only on
the previous observations (and does not depend on actions and
rewards). Since we have confined ourselves to considering finite
action spaces, the agent is required to give ranges of probabilities
for the next observation, where the ranges are fixed beforehand. The
reward $1$ is given if all the ranges are correct and the reward $0$
is given otherwise. It is easy to check that any such environment is
value-stable with $r_i^\nu\equiv 1$, $d(k,\eps)\equiv1$,
$\phi(n,\eps)\equiv0$, since, knowing the distribution, one can
always start giving the correct probability ranges (this defines the
policy $p_\nu$).

Obviously, there are active value stable environments too. The next
proposition provides some conditions on mixing rates which are
sufficient for value-stability; we do not intend to provide sharp
conditions on mixing rates but rather to illustrate the relation of
value-stability with mixing conditions.

We say that a stochastic process $h_k$, $k\in\SetN$ satisfies
strong $\alpha$-mixing conditions with coefficients $\alpha(k)$
if (see e.g. \cite{Bosq:96})
 \beqn
 \sup_{n\in\SetN} \sup_{B\in\sigma(h_1,\dots,h_n), C\in\sigma(h_{n+k},\dots)}
 |\P(B\cap C)-\P(B)\P(C)| \le \alpha(k),
\eeqn
where $\sigma()$ stands for the sigma-algebra generated by the
random variables in brackets. Loosely speaking, mixing
coefficients $\alpha$ reflect the speed with which the process
``forgets'' about its past.

\begin{proposition}[mixing and value-stability]\label{prop:mix}
Suppose that an explorable environment $\nu$ is such that there
exist a sequence of numbers $r^\nu_i$ and a function $d(k)$ such
that $\frac{1}{n}\r1n^\nu\rightarrow V^*_\nu$, $d(k)=o(k)$, and for
each $z_{<k}$ there exists a policy $p$ such that the sequence
$r_i^{p\nu}$ satisfies strong $\alpha$-mixing conditions with
coefficients $\alpha(k)=\frac{1}{k^{1+\eps}}$ for some $\eps>0$ and
\beqn
  \r{k}{k+n}^\nu-\E \left( \r{k}{k+n}^{p\nu}\mid z_{<k}\right)\le d(k)
\eeqn
for any $n$. Then $\nu$ is value-stable.
\end{proposition}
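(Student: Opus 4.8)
The plan is to verify Definition~\ref{def:vstable} directly, reusing the sequence $r_i^\nu$ and the convergence $\frac1n\r1n^\nu\to V_\nu^*$ supplied by the hypothesis, and setting $d_\nu(k,\eps):=d(k)$, which is $o(k)$ and (harmlessly) independent of $\eps$. It remains to produce a summable $\phi_\nu(n,\eps)$. Fix $k$, a history $z_{<k}$, and let $p=p_\nu^{z_{<k}}$ be the policy furnished by the hypothesis. Writing the conditional mean $M_n:=\E(\r{k}{k+n}^{p\nu}\mid z_{<k})$, I would split
\beqn
  \r{k}{k+n}^\nu-\r{k}{k+n}^{p\nu}
  \;=\; \big(\r{k}{k+n}^\nu-M_n\big)+\big(M_n-\r{k}{k+n}^{p\nu}\big)
  \;\le\; d(k)+\big(M_n-\r{k}{k+n}^{p\nu}\big),
\eeqn
where the bound on the first bracket is exactly the expectation hypothesis. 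Hence the event in~\eqref{eq:svs} is contained in $\{M_n-\r{k}{k+n}^{p\nu}>n\eps\}$, and it suffices to bound the conditional lower-tail deviation $\P(M_n-\r{k}{k+n}^{p\nu}>n\eps\mid z_{<k})$ of the reward sum about its own conditional mean.

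For this I would invoke an exponential inequality for bounded strongly mixing sequences, applied to the centred process $X_t:=r_{k+t-1}^{p\nu}-\E(r_{k+t-1}^{p\nu}\mid z_{<k})$, $t=1,\dots,n+1$, under the conditional law $\P(\cdot\mid z_{<k})$: the $X_t$ are bounded by $r_{max}$ and, by hypothesis, $\alpha$-mixing with $\alpha(m)=m^{-(1+\delta)}$ (the exponent written $\eps$ in the statement; I rename it $\delta$ to free the symbol $\eps$ for Definition~\ref{def:vstable}). Bosq's blocking inequality \cite{Bosq:96} then gives, for every integer block-count $q\in\{1,\dots,\lfloor n/2\rfloor\}$, a bound of the form
\beqn
  \P\Big(M_n-\r{k}{k+n}^{p\nu}>n\eps\mid z_{<k}\Big)
  \;\le\; c_1\exp\!\Big(-\tfrac{c_2\eps^2 q}{r_{max}^2}\Big)
          + c_3\,q\,\alpha\!\Big(\big\lfloor\tfrac{n}{2q}\big\rfloor\Big),
\eeqn
with absolute constants $c_1,c_2$ and a prefactor $c_3=c_3(\eps,r_{max})$ that is constant in $n$. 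The two terms trade off the loss of independence (second term) against the concentration gained across the nearly independent blocks (first term).

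It remains to choose $q=q(n)$ so that the right-hand side, taken as $\phi_\nu(n,\eps)$, is summable. Substituting $\alpha(m)=m^{-(1+\delta)}$ and $q\asymp n^\beta$ turns the second term into $\asymp n^{\beta-(1-\beta)(1+\delta)}$, which is summable precisely when $\beta<\delta/(2+\delta)$, while the first term $c_1\exp(-c_2\eps^2 n^\beta/r_{max}^2)$ is summable for any $\beta>0$. Since $\delta>0$, the interval $(0,\delta/(2+\delta))$ is nonempty; fixing any $\beta$ in it (e.g.\ $\beta=\delta/(2(2+\delta))$) yields a $\phi_\nu(n,\eps)$ with $\sum_n\phi_\nu(n,\eps)<\infty$ for each fixed $\eps$, completing the verification. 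The main obstacle is exactly this summability: Chebyshev together with the covariance bound $|\mathrm{Cov}(r_i,r_j)|\le C\,\alpha(|i-j|)$ only yields $\mathrm{Var}(\r{k}{k+n}^{p\nu})=O(n)$ and hence the non-summable rate $O(1/(n\eps^2))$, so a genuine large-deviation (blocking) estimate is needed, and it is the strict positivity of the mixing exponent $\delta$ that forces the residual mixing term to decay faster than $1/n$. A secondary point to handle carefully is that the mixing hypothesis must be read as applying to the \emph{conditional} process given $z_{<k}$, so that the inequality is legitimately applied under $\P(\cdot\mid z_{<k})$.
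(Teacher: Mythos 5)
Your argument follows essentially the same route as the paper's proof: split off the $d(k)$ term via the expectation hypothesis, then control the deviation of the reward sum from its (conditional) mean using Bosq's blocking inequality for bounded $\alpha$-mixing sequences with block count $q\asymp n^{\beta}$. If anything, your choice $\beta=\delta/(2(2+\delta))$ is more careful than the paper's $q=n^{\eps/(2+\eps)}$, which sits exactly on the boundary and yields a residual term of order $1/n$ (not summable as written), whereas any $\beta$ strictly below $\delta/(2+\delta)$, as you require, gives genuine summability.
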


\begin{proof}
Using the union bound we obtain
\bqan
  & & \P\left( \r{k}{k+n}^\nu-\r{k}{k+n}^{p\nu}>d(k)+n\eps\right)
\\
  & & \le I\left( \r{k}{k+n}^\nu-\E\r{k}{k+n}^{p\nu}>d(k)\right)+
    \P\left(\left|\r{k}{k+n}^{p\nu}-\E\r{k}{k+n}^{p\nu}\right|>n\eps\right).
\eqan
The first term equals $0$ by assumption and the second term for each
$\eps$ can be shown to be summable using \cite[Thm.1.3]{Bosq:96}:
for a sequence of uniformly bounded zero-mean random variables $r_i$
satisfying strong $\alpha$-mixing conditions the following bound
holds true for any integer $q\in[1,n/2]$
\beqn
  \P\left(|\r1n|>n\eps\right)\le c e^{-\eps^2q/c}+cq\alpha\left(\frac{n}{2q}\right)
\eeqn
for some constant $c$; in our case we just set
$q=n^{\frac{\eps}{2+\eps}}$.
\end{proof}

\paradot{(PO)MDPs}
Applicability of Theorem~\ref{th:main} and
Proposition~\ref{prop:mix} can be illustrated on (PO)MDPs. We note
that self-optimizing policies for (uncountable) classes of finite
ergodic MDPs and POMDPs are known \cite{Brafman:01,EvenDar:05}; the
aim of the present section is to show that value-stability is a
weaker requirement than the requirements of these models, and also
to illustrate applicability of our results. We call $\mu$ a
(stationary) {\em Markov decision process} (MDP) if the probability
of perceiving $x_k\in\X$, given history $z_{<k}y_k$ only depends on
$y_k\in\Y$ and $x_{k-1}$. In this case $x_k\in\X$ is called a {\em
state}, $\X$ the {\em state space}.
An MDP $\mu$ is called {\em ergodic} if there exists a policy
under which every state is visited infinitely often with
probability 1. An MDP with a stationary policy forms a Markov
chain.

An environment is called a (finite) {\em partially observable MDP}
(POMDP) if there is a sequence of random variables $s_k$ taking
values in a finite space $\mathcal S$ called the state space, such
that $x_k$ depends only on $s_k$ and $y_k$, and $s_{k+1}$ is
independent of $s_{<k}$ given $s_k$. Abusing notation the
sequence $s_{1:k}$ is called the underlying Markov chain. A POMDP
is called {\em ergodic} if there exists a policy such that the
underlying Markov chain visits each state infinitely often with
probability 1.

In particular, any ergodic POMDP $\nu$ satisfies strong
$\alpha$-mixing conditions with coefficients decaying exponentially
fast in case there is a set $H\subset\R$ such that $\nu(r_i\in H)=1$
and $\nu(r_i=r|s_i=s,y_i=y)\ne 0$ for each $y\in\Y, s\in \mathcal S,
r\in H, i\in\SetN$. Thus for any such POMDP $\nu$ we can use
Proposition \ref{prop:mix} with $d(k,\eps)$ a constant function to
show that $\nu$ is value-stable:
\begin{corollary}[POMDP$\Rightarrow$value-stable] Suppose that a
POMDP $\nu$ is ergodic and there exists a set $H\subset\R$ such
that $\nu(r_i\in H)=1$ and $\nu(r_i=r|s_i=s,y_i=y)\ne 0$ for each $y\in\Y, h\in
\mathcal S, r\in H$, where $\mathcal S$ is the finite state space
of the underlying Markov chain. Then $\nu$ is
value-stable.
\end{corollary}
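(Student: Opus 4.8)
The plan is to reduce everything to Proposition~\ref{prop:mix}: once we exhibit a sequence $r_i^\nu$, a function $d(k)=o(k)$, and, for every history $z_{<k}$, a recovery policy $p$ whose reward sequence $r_i^{p\nu}$ is strongly $\alpha$-mixing with $\alpha(k)=k^{-(1+\eps)}$ and satisfies the in-expectation catch-up bound, value-stability follows immediately. Since $\nu$ is ergodic it is explorable and the optimal average reward $V^*_\nu$ is attained by a stationary (belief-based) gain-optimal policy and is independent of the initial state, so I would simply take the constant sequence $r_i^\nu\equiv V^*_\nu$; then $\frac1n\r1n^\nu\to V^*_\nu$ holds trivially and $\r{k}{k+n}^\nu$ equals $(n{+}1)V^*_\nu$.

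Next, fix a history $z_{<k}$. First I would let $p$ be a stationary gain-optimal recovery policy, under which the hidden chain $s_k$ is a finite ergodic Markov chain and $r_k$ is a noisy function of $(s_k,y_k)$ through the emission law, which by hypothesis has full support on $H$. For the catch-up bound I would invoke the standard fact that a finite ergodic MDP has bounded span (bounded bias function): after any starting belief the expected $n$-step reward under a gain-optimal policy differs from $nV^*_\nu$ by at most a constant, so $\r{k}{k+n}^\nu-\E(\r{k}{k+n}^{p\nu}\mid z_{<k})\le d(k)$ holds with a \emph{constant} $d(k)=O(1)$, which is certainly $o(k)$ and independent of $z_{<k}$.

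The key step, and the one I expect to be the main obstacle, is the exponential $\alpha$-mixing of the reward sequence. The idea is that a finite ergodic chain is geometrically ergodic, so its $n$-step transition law converges to the stationary law at rate $\rho^n$; the reward process, being a hidden-Markov functional of this chain with emissions supported on all of $H$, should then inherit strong $\alpha$-mixing with $\alpha(k)\le c\rho^k$, and in particular $\alpha(k)\le k^{-(1+\eps)}$ for all large $k$, as Proposition~\ref{prop:mix} demands. Two points need care. First, partial observability: I would pass to the belief MDP so that (belief, hidden state) is genuinely Markov and $r_k$ is a function of it, and I must check that the geometric convergence is \emph{uniform} over the initial belief induced by $z_{<k}$, which is exactly what absorbs the conditioning on an arbitrary history. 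Second, the role of the full-support assumption $\nu(r_i=r\mid s_i=s,y_i=y)\neq0$ on $H$: it guarantees the emissions are nondegenerate, so that the reward sequence is a genuinely noisy functional of the chain to which the geometric-mixing estimate and the Bosq-type inequality of Proposition~\ref{prop:mix} can be applied. The genuinely delicate issue is periodicity: geometric $\alpha$-mixing of the reward process really requires the chain under $p$ to be aperiodic as well as irreducible, so I would either read ``ergodic'' in the strong (irreducible \emph{and} aperiodic) sense or exploit the freedom in the choice of the gain-optimal recovery policy to induce an aperiodic chain, and verifying this compatibility with gain-optimality is where the real work lies. Once these are in place, Proposition~\ref{prop:mix} applies verbatim and yields value-stability of $\nu$.
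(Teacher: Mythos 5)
Your proposal takes essentially the same route as the paper: the paper's entire proof of this corollary is the assertion, in the paragraph preceding it, that an ergodic POMDP with the full-support reward condition satisfies strong $\alpha$-mixing with exponentially decaying coefficients, followed by an application of Proposition~\ref{prop:mix} with $d(k,\eps)$ a constant function. Your reconstruction (constant $r_i^\nu$, bounded-bias constant $d$, geometric ergodicity plus nondegenerate emissions yielding mixing uniformly over histories) is that same argument worked out in more detail than the paper provides, and the delicate points you flag (aperiodicity, uniformity over initial beliefs) are real but are glossed over by the paper as well.
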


However, it is illustrative to obtain this result
for MDPs directly, and in a slightly stronger form.

\begin{proposition}[MDP$\Rightarrow$value-stable]\label{prop:MDP}
Any finite-state ergodic MDP $\nu$ is a
value-stable environment.
\end{proposition}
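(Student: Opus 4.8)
The strategy is to reduce value-stability of a finite-state ergodic MDP to an application of Proposition~\ref{prop:mix}, by exhibiting the required sequence $r_i^\nu$, the function $d(k)$, and a recovery policy whose reward process mixes fast enough. Since the paper permits assuming Proposition~\ref{prop:mix}, the cleanest route is to verify its two hypotheses: a uniform sublinear catch-up in expectation, and strong $\alpha$-mixing of the reward sequence under the recovery policy.

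\medskip
\noindent\textbf{Step 1: Identify the optimal stationary policy and $V^*_\nu$.}
First I would invoke the standard theory of finite ergodic MDPs: there exists an optimal stationary (deterministic) policy $p_\nu$ whose induced Markov chain on the finite state space $\X$ is irreducible (ergodic), with a unique stationary distribution and a well-defined long-run average reward $V^*_\nu$. I set $r_i^\nu$ to be the per-step reward accumulated by this optimal policy started from stationarity, so that by the ergodic theorem for Markov chains $\frac1n \r1n^\nu \to V^*_\nu$.

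\medskip
\noindent\textbf{Step 2: Define the recovery policy and bound the expected catch-up.}
For an arbitrary history $z_{<k}$, the environment is in some state $x_{k-1}$; I take $p=p_\nu$ (the same optimal stationary policy). Because the chain is irreducible on a finite state space, starting from any state it reaches the stationary regime with an exponentially small transient gap. Concretely, the difference between the expected reward of $p_\nu$ started from $x_{k-1}$ and the stationary optimal reward over the next $n$ steps is bounded by a constant independent of $n$ and of the starting state (the transient contribution telescopes and is summable by geometric ergodicity). This gives $\r{k}{k+n}^\nu - \E(\r{k}{k+n}^{p\nu}\mid z_{<k}) \le d(k)$ with $d(k)$ a constant, hence $d(k)=o(k)$, which is the first hypothesis of Proposition~\ref{prop:mix}.

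\medskip
\noindent\textbf{Step 3: Verify exponential $\alpha$-mixing of the reward process.}
Under the stationary optimal policy the state sequence is a finite irreducible aperiodic Markov chain, and such chains are geometrically (hence exponentially) $\alpha$-mixing; the rewards $r_i^{p\nu}$ are a function of $(x_{i-1},y_i)$, so they inherit mixing coefficients $\alpha(k)\le C\rho^k$ for some $\rho<1$. In particular $\alpha(k)=O(k^{-(1+\eps)})$ for every $\eps>0$, satisfying the mixing hypothesis of Proposition~\ref{prop:mix}. Applying that proposition then yields value-stability. The ``slightly stronger form'' is that here $\phi_\nu(n,\eps)$ decays exponentially and $d$ is a bounded constant rather than merely $o(k)$.

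\medskip
\noindent\textbf{Main obstacle.}
The delicate point is aperiodicity: a finite irreducible chain need not be aperiodic, and a periodic chain is \emph{not} $\alpha$-mixing in the strict sense (correlations persist along the period). I would handle this either by passing to a lazy/averaged version of the chain or by blocking over the period $d_0$ so that the blocked reward sums mix exponentially, then transferring the concentration bound back to $\r{k}{k+n}^{p\nu}$; the bounded transient and the averaging make the periodicity irrelevant for the average reward. Ensuring the $\alpha$-mixing bound survives this blocking while keeping $d(k)$ a true constant (uniform over start states and over $k$) is the step requiring the most care.
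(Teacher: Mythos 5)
Your overall route --- verify the hypotheses of Proposition~\ref{prop:mix} and cite it --- is the route the paper takes only for the POMDP corollary; for MDPs the paper deliberately argues directly, and the two weak points in your plan are exactly why. The first genuine gap is in Step~1: the claim that an optimal stationary policy of a finite ergodic MDP induces an \emph{irreducible} chain is false. In this paper ``ergodic'' only means that \emph{some} policy visits every state infinitely often; the optimal policy's chain may well be reducible (two states, both reachable from each other, reward $1$ only in state $2$: the optimal policy absorbs in state $2$, and no irreducible stationary policy is optimal). Consequently there is no ``unique stationary distribution'' to start from, and your constant bound on the transient gap in Step~2 rests on a premise that fails. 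The paper's proof avoids this by \emph{not} taking $p=p_\nu$ from the current state: its recovery policy first uses ergodicity to travel, with finite expected hitting time $\E\, l(b,a)$, to a state actually visited by the optimal policy, and only then follows $p_\mu$; the resulting synchronization issue between the two trajectories is handled via the threshold $f(n)=n r_{\max}/\log n$, splitting the deviation event into a hitting-time tail and a concentration term.

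The second gap is the periodicity obstacle you yourself flag, which is fatal to the reduction \emph{as stated}, and neither proposed fix closes it. Passing to a lazy version of the chain is inadmissible: you may only choose actions in the given MDP, not alter its transition kernel, and there may be no action that holds the state. Blocking over the period can be made to work, but then it is the \emph{blocked} process that mixes, while the reward sequence $r_i^{p\nu}$ itself need not satisfy any $\alpha$-mixing condition at all (take deterministic rewards on a periodic chain: correlations persist forever), so the hypotheses of Proposition~\ref{prop:mix} are simply not met and you cannot ``apply that proposition'' --- you would have to reprove the Bosq-type concentration inequality for block sums, at which point you have reconstructed a direct proof. That is what the paper does: with $d(k,\varepsilon)=0$ it bounds the deviation probability by the tail of the hitting time plus the deviation of the optimal policy's reward sum from its conditional expectation, both exponentially small in $n$, never invoking mixing of the reward process. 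This is also why the paper calls the MDP result ``slightly stronger'' than the POMDP corollary: the corollary's extra condition that every reward in $H$ have nonzero probability in each state--action pair exists precisely to force the mixing that a bare ergodic MDP need not possess.
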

\begin{proof}
Let $d(k,\eps)=0$. Denote by $\mu$ the
true environment, let $z_{<k}$ be the current history and let the
current state (the observation $x_{k}$) of the environment be
$a\in\X$, where $\X$ is the set of all possible states. Observe
that for an MDP there is an optimal policy which depends only on
the current state. Moreover, such a policy is optimal for any
history. Let $p_\mu$ be such a policy. Let $r_i^\mu$ be the
expected reward of $p_\mu$ on step $i$. Let $l(a,b)=\min\{n:
x_{k+n}=b | x_{k}=a \}$. By ergodicity of $\mu$ there exists a
policy $p$ for which $\E l(b,a)$ is finite (and does not depend
on $k$). A policy $p$ needs to get from the state $b$ to one of
the states visited by an optimal policy, and then acts according
to $p_\mu$. Let $f(n):=\frac{nr_{\max}}{\log n}$. We have \bqan
  & & \P\left( \left|\r{k}{k+n}^\mu-\r{k}{k+n}^{p\mu}\right|>n\eps\right)\le
  \sup_{a\in\X } \P\left( \left|\E\left(\r{k}{k+n}^{p_\mu\mu}|x_k=a\right)
           - \r{k}{k+n}^{p\mu}\right|>n\eps)\right)
\it \\ & & \le \sup_{a,b\in\X}\P(l(a,b)>f(n)/r_{\max})
\\
 & & +
    \; \sup_{a,b\in\X}\P\left( \left|\E\left(\r{k}{k+n}^{p_\mu\mu}| x_k=a\right)
           - \r{k+f(n)}{k+n}^{p_\mu\mu}\right|>n\eps-f(n)\Big|
           x_{k+f(n)}=a\right)
\rm \\ & & \le \sup_{a,b\in\X}\P(l(a,b)>f(n)/r_{\max})
\\
 & & +
    \; \sup_{a\in\X}\P\left( \left|\E\left(\r{k}{k+n}^{p_\mu\mu}| x_k=a\right)
           - \r{k}{k+n}^{p_\mu\mu}\right|>n\eps-2f(n)\Big|
           x_{k}=a\right).
\eqan
In the last term we have the deviation of the reward attained by
the optimal policy from its expectation. Clearly, both terms are
bounded exponentially in $n$.
\end{proof}

In the examples above the function $d(k,\eps)$ is a constant and
$\phi(n,\eps)$ decays exponentially fast. This suggests that the
class of value-stable environments stretches beyond finite
(PO)MDPs. We illustrate this guess by the construction that follows.

A general scheme for constructing
{\bf value-stable environment or recoverable environments}:
infinitely armed bandit.
Next we present a construction of environments which cannot be
modelled as finite POMDPs but are value-stable and/or recoverable.
Consider the following environment $\nu$. There is a countable
family $\C'=\{\zeta_i: i\in\SetN\}$ of {\em arms}, that is, sources
generating i.i.d. rewards $0$ and $1$ (and, say, empty observations)
with some probability $\delta_i$ of the reward being $1$. The action
space $\Y$ consists of three actions $\Y=\{g,u,d\}$. To get the next
reward from the current arm $\zeta_i$ an agent can use the action
$g$. Let $i$ denote the index of the current arm. At the beginning
$i=0$, the current arm is $\zeta_0$ and then the agent can move
between arms as follows: it can move $U(i)$ arms ``up'' using the
action $u$ (i.e. $i:=i+U(i)$) or it can move $D(i)$ arms ``down''
using the action $d$ (i.e. $i:=i-D(i)$ or 0 if the result is
negative). The reward for actions $u$ and $d$ is $0$. In all the
examples below $U(i)\equiv1$, that is, the action $u$ takes the
agent one arm up.

Clearly, $\nu$ is a POMDP with countably infinite number of states
in the underlying Markov chain, which (in general) is not isomorphic
to a finite POMDP.

\begin{claim}
If $D(i)=i$ for all $i\in\SetN$ then the environment $\nu$ just
constructed is value-stable. If $D(i)\equiv1$ then $\nu$ is
recoverable but not necessarily value-stable; that is, there are
choices of the probabilities $\delta_i$ such that $\nu$ is not
value-stable.
\end{claim}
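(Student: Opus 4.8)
The plan is to treat the two regimes separately, in each case exploiting how the cost of returning toward the best arm scales with the height reached. Throughout write $\delta^*:=\sup_i\delta_i$. A preliminary step, common to both parts, is to verify that $\nu$ is (upper) explorable with $V^*_\nu=\up V^*_\nu=\delta^*$. For the upper bound, note that every arm pays at most $\delta^*$ per $g$-action while the moves $u,d$ pay $0$; a martingale/Hoeffding argument then shows that every policy has running average $\le\delta^*+o(1)$ a.s., so $\up V(\nu,p)\le\delta^*$. For the matching value I would use a ``climbing'' policy that visits a sequence of arms $M_1<M_2<\dots$ with $\delta_{M_j}\uparrow\delta^*$, staying $T_j\to\infty$ steps on $M_j$ and choosing the $T_j$ so that the repositioning steps are a vanishing fraction of time; Borel--Cantelli on the block sums gives $\frac1n\r1n^\nu\to\delta^*$ a.s., hence $V(\nu,p_\nu)=\delta^*$.

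For Part 1 ($D(i)=i$) the key observation is that a single action $d$ from any arm $j$ resets the agent to arm $0$ (since $j-j=0$), so recovery from an arbitrary history is as cheap as starting afresh. In the cleanest case, where $\delta^*$ is attained at some arm $m$, I take the reference sequence $r^\nu_i\equiv\delta^*$ (legitimate, since $\tfrac1n\,\r1n^\nu=\delta^*=V^*_\nu$) and, given a history $z_{<k}$ ending at arm $j$, the recovery policy $p=p^{z_{<k}}_\nu$ that plays $d$ once, climbs the $m$ arms to $m$, and then plays $g$ forever. After these $1+m$ deterministic steps the rewards are i.i.d.\ Bernoulli$(\delta^*)$, and Hoeffding gives
$$
  \P\!\left(\r{k}{k+n}^\nu-\r{k}{k+n}^{p\nu}>(1+m)r_{\max}+n\eps\mid z_{<k}\right)\le e^{-2n\eps^2}.
$$
Thus $d_\nu(k,\eps):=(1+m)r_{\max}$ is constant in $k$ (so $o(k)$) and $\phi_\nu(n,\eps):=e^{-2n\eps^2}$ is summable, establishing value-stability.

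For Part 2 ($D(i)\equiv1$), recoverability is easy and in fact uses nothing about $D$: from any arm the agent can reach, in finitely many moves, any prescribed arm and then run a schedule with upper average $\delta^*=\up V^*_\nu$; since a finite transient is invisible to the $\limsup$, Definition~\ref{def:vstable2} holds. To show value-stability can fail I exhibit a concrete choice: $\delta_0=1$ and $\delta_i=0$ for $i\ge1$, so $V^*_\nu=\delta^*=1$ and necessarily $r^\nu_i\equiv1$. For each $k$ take the history $z_{<k}$ consisting of $k-1$ consecutive $u$-actions; it has $\nu$-probability $1$ (the $u$-rewards are deterministically $0$) and leaves the agent at arm $k-1$. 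Since $U=D=1$, reaching arm $0$ requires at least $k-1$ non-$g$ actions and only arm $0$ yields positive reward, so over the window of length $n+1$ with $n=k$ every policy collects at most two units, whereas $\r{k}{k+n}^\nu=n+1$; hence $\r{k}{k+n}^\nu-\r{k}{k+n}^{p\nu}\ge k-1$ deterministically. For any purported $d_\nu(k,\eps)=o(k)$, taking $\eps=\tfrac12$ gives $d_\nu(k,\tfrac12)+n\eps<k-1$ for all large $k$, so the conditional probability on the left of \eqref{eq:svs} equals $1$, contradicting $\phi_\nu(k,\tfrac12)\to0$. Therefore $\nu$ is not value-stable.

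The main obstacle, and the step I would be most careful with, is Part 1 when $\delta^*$ is \emph{not} attained and the $\delta_i$ are not monotone: the recovery policy must be a single policy serving all $\eps$ simultaneously (the quantifier order in Definition~\ref{def:vstable} forbids letting $p$ depend on $\eps$), yet after visiting a good arm the schedule may pass through worse arms. The fix is to reset to $0$ and follow a fixed schedule visiting $M_1<M_2<\dots$ with $\delta_{M_j}\uparrow\delta^*$ and staying $T_j\to\infty$; because $D(i)=i$ lets each repositioning (reset plus climb) cost only $O(M_j)$ steps and be taken independent of $k$, for every fixed $\eps$ the time to enter the phase with $\delta_{M_j}>\delta^*-\eps/2$ is a $k$-independent constant $D(\eps)$, absorbed into $d_\nu(k,\eps):=D(\eps)r_{\max}$, while the surviving per-step deficit is $\le\eps/2$ plus a Hoeffding fluctuation. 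Verifying that the $T_j$ can be tuned so that (i) the overall average converges to $\delta^*$ a.s.\ and (ii) the repositioning costs inside any window $[k,k+n]$ stay $o(k)$ uniformly is where the real work lies; the Part 2 counterexample, by contrast, is entirely deterministic and needs only the linear-recovery-cost observation.
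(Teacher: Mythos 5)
Your proposal is correct, and its skeleton matches the paper's: explorability via a climbing schedule through arms $M_1<M_2<\dots$ with $\delta_{M_j}\uparrow\delta^*$ and growing dwell times; the one-step reset afforded by $D(i)=i$ as the crux of part 1; and, in part 2, exactly the paper's counterexample $\delta_0=1$, $\delta_i=0$ for $i\ge1$, with the same linear-recovery-cost calculation. The genuine divergence is in how (\ref{eq:svs}) is verified for $D(i)=i$. The paper tethers the reference sequence to a near-optimal policy $p$, modified so that its arm index satisfies $j(t)\le\sqrt{t}$, sets $r^\nu_i$ to be $p$'s expected rewards, and has the recovery policy \emph{catch up} with $p$'s current position (one $d$ plus at most about $\sqrt{k}$ actions $u$), yielding $d_\nu(k,\eps)=\sqrt{k}$. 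You instead decouple the reference from any policy's timeline by taking $r^\nu_i\equiv\delta^*$ and \emph{restarting} a fixed schedule from arm $0$ after the reset; since the burn-in until the schedule sits on arms with $\delta_{M_j}>\delta^*-\eps/2$ is a $k$-independent constant, you obtain $d_\nu(k,\eps)$ constant in $k$ (though $\eps$-dependent) — strictly stronger than the paper's $\sqrt{k}$, at the price of the within-window accounting of transition costs, which you rightly flag as the real work; your sketch of it is at least as detailed as the paper's own. You also explicitly address the quantifier subtlety that the recovery policy may not depend on $\eps$, which the paper glosses over, and your recoverability argument for part 2 is the same as the paper's.

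One small slip in part 2: from $\frac{1}{n}\r1n^\nu\to V^*_\nu=1$ and $r^\nu_i\in[0,1]$ you cannot conclude that ``necessarily $r^\nu_i\equiv1$''; the definition only forces Ces\`aro convergence. The argument is robust, however: for any admissible reference sequence, $\r{k}{2k}^\nu=\r1{2k}^\nu-\r1{k-1}^\nu\ge 2k(1-o(1))-(k-1)=k(1-o(1))$, so with $n=k$ the deficit against any policy (which still collects at most two units in the window) remains linear in $k$, and the contradiction with $d_\nu(k,\tfrac12)=o(k)$ and summability of $\phi_\nu(\cdot,\tfrac12)$ goes through unchanged.
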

\begin{proof}
First we show that in either case ($D(i)=i$ or $D(i)=1$) $\nu$ is
explorable. Let $\delta=\sup_{i\in\SetN}\delta_i$. Clearly, $\up
V(\nu,p')\le \delta$ with probability $1$ for any policy $p'$ . A
policy $p$ which, knowing all the probabilities $\delta_i$, achieves
$\up V(\nu,p) =\low V(\nu,p) =\delta=:V^*_\nu$ a.s., can be easily
constructed. Indeed, find a sequence $\zeta'_j$, $j\in\SetN$, where
for each $j$ there is $i=:i_j$ such that $\zeta'_j=\zeta_i$,
satisfying $\lim_{j\rightarrow\infty}\delta_{i_j}=\delta$. The
policy $p$ should carefully exploit one by one the arms $\zeta_j$,
staying with each arm long enough to ensure that the average reward
is close to the expected reward with $\eps_j$ probability, where
$\eps_j$ quickly tends to 0, and so that switching between arms has
a negligible impact on the average reward. Thus $\nu$ can be shown
to be explorable. Moreover, a policy $p$ just sketched can be made
independent on (observation and) rewards.

Next we show if $D(i)=i$, that is, the action $d$ always takes the
agent down to the first arm, then the environment is value-stable.
Indeed, one can modify the policy $p$ (possibly allowing it to
exploit each arm longer) so that on each time step $t$ (from some
$t$ on) we have $j(t)\le\sqrt{t}$, where $j(t)$ is the number of the
current arm on step $t$. Thus, after any actions-perceptions history
$z_{<k}$ one needs about $\sqrt{k}$ actions (one action $u$ and
enough actions $d$) to catch up with $p$. So, (\ref{eq:svs}) can be
shown to hold with $d(k,\eps)=\sqrt{k}$, $r_i$ the expected reward
of $p$ on step $i$ (since $p$ is independent of rewards,
$r^{p\nu}_i$ are independent), and the rates $\phi(n,\eps)$
exponential in $n$.

To construct a non-value-stable environment with $D(i)\equiv1$,
simply set $\delta_0=1$ and $\delta_j=0$ for $j>0$; then after
taking $n$ actions $u$ one can only return to optimal rewards with
$n$ actions ($d$), that is $d(k)=o(k)$ cannot be obtained. Still it
is easy to check that recoverability is preserved, whatever the
choice of $\delta_i$.
\end{proof}

In the above construction we can also allow the action $d$ to bring
the agent $d(i)<i$ steps down, where $i$ is the number of the
current environment $\zeta$, according to some (possibly randomized)
function $d(i)$, thus changing the function $d_\nu(k,\eps)$ and
possibly making it non-constant in $\eps$ and as close as desirable
to linear.

\section{Necessity of value-stability}\label{sec:nec}

Now we turn to the question of how tight the conditions of
value-stability are. The following proposition shows that the
requirement $d(k,\eps)=o(k)$ in (\ref{eq:svs}) cannot be relaxed.

\begin{proposition}[\boldmath necessity of $d(k,\eps)=o(k)$]\label{th:tight}
There exists a countable family of deterministic explorable
environments $\C$ such that
\begin{itemize} \item for any $\nu\in\C$ for any sequence of
actions $y_{<k}$ there exists a policy $p$ such that
\beqn
  \r{k}{k+n}^\nu \le \r{k}{k+n}^{p\nu}+k \text{ for all }n\ge k,
\eeqn
where $r_i^\nu$ are the rewards attained by an optimal policy
$p_\nu$ (which from the beginning was acting optimally), but
\item for any policy $p$ there exists an environment $\nu\in\C$
such that $\low V(\nu,p) < V^*_\nu$ (i.e. there is no
self-optimizing policy for $\mathcal C$).
\end{itemize}
\end{proposition}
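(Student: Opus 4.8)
The plan is to build $\C$ on the infinitely-armed-bandit corridor of Section~\ref{sec:ex} in its non-value-stable regime $U(i)\equiv1$, $D(i)\equiv1$, since that geometry already delivers the first bullet for free. After any history $z_{<k}$ the agent sits at an arm of index at most $k$ (it can have used the action $u$ at most $k$ times), so a recovery policy $p$ need only walk back to the arm that the optimal-from-start policy $p_\nu$ exploits; this costs at most $k$ moves, each returning reward $0$. Hence $\r{k}{k+n}^\nu-\r{k}{k+n}^{p\nu}\le k$ for every $n\ge k$, which is exactly the borderline $d_\nu(k,\eps)=k$ rather than $o(k)$, and taking deterministic rewards makes $\phi\equiv0$. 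I would also have to verify explorability of each member, which is immediate once every environment has a well-defined optimal arm (or climbing schedule) attaining an average value $V^*_\nu$.

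For the second bullet I would choose the rewards so that the members of $\C$ are \emph{pairwise indistinguishable until the agent climbs deep enough}: any two environments produce the same reward stream under any policy up to the first time the agent's arm index exceeds a threshold growing with the environment's index. Indexing the family by ``the depth at which the environment first departs from a distinguished baseline $\nu_0\in\C$,'' certifying that one is in $\nu_m$ rather than in some consistent $\nu_{m'}$ with larger index forces an excursion to depth $\sim m$, costing $\Theta(m)$ steps of lost reward and $\Theta(m)$ more to return (this is where $D(i)\equiv1$ bites). The impossibility is then extracted by coupling: fix any policy $p$ and run it in $\nu_0$. To be self-optimizing, $p$ must eventually reach the profitable depth of every $\nu_m$; and since its behaviour in $\nu_0$ coincides with its behaviour in $\nu_m$ until the first arm exceeding that depth, $p$ is forced to make excursions to unboundedly large depths \emph{inside $\nu_0$ as well}. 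If the reward profile is arranged so these forced excursions occur at a positive asymptotic density of times, each losing a constant fraction of its length, then $\low V(\nu_0,p)<V^*_{\nu_0}$, so no $p$ is self-optimizing for $\C$.

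The hard part will be precisely this last step: defeating the leniency of the lower limiting average $\low V$. A single excursion, however expensive, can always be diluted to a vanishing fraction of time, so any ``one-shot'' obstruction is useless --- there is always a slow-exploration policy that pays each cost on a vanishing density of steps and still attains $V^*$ on every environment that deviates only finitely often. To rule this out the forcing must be genuinely \emph{recurrent}: the family has to make the agent re-commit, at a geometric sequence of scales $k_1<k_2<\dots$, to an excursion of size $\Theta(k_i)$ that cannot be certified more cheaply, so that the accumulated cost $\sum_i\Theta(k_i)$ is a constant fraction of the elapsed time $\Theta(k_i)$. Producing a \emph{countable} family that simultaneously (i) keeps recovery linear, (ii) forecloses cheap peeking, so that learning the relevant bit at scale $k_i$ really costs $\Theta(k_i)$, and (iii) defeats \emph{every} policy (the family being fixed before $p$, one cannot merely diagonalise against a prescribed list), is the delicate point. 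It is also exactly where the threshold $o(k)$ versus $O(k)$ is sharp: with $d(k,\eps)=o(k)$ the same accumulated excursion cost would itself be $o(\text{time})$ and the obstruction would collapse, consistent with Theorem~\ref{th:main}.
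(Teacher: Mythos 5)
Your first bullet is essentially fine, but the heart of the proposition is the second bullet, and that is exactly the part you leave open --- and the route you sketch cannot be completed inside the bandit corridor. With history-independent deterministic arms, any difference between a member $\nu_m$ and the baseline sits at a fixed depth $f(m)$, so certifying $\nu_m$ costs a \emph{constant} $\Theta(f(m))$ depending only on $m$, never on elapsed time. A slow-exploration policy that probes depth $f(m)$ only at a time $t_m$ with $f(m)=o(t_m)$ then defeats the whole family: in the baseline the excursion dips have vanishing relative size (so $\low V$ equals the optimal value there), while in each $\nu_m$ the single excursion discovers the profitable arm, after which the policy stays and is optimal. Your own diagnosis --- that the forcing must recur with cost $\Theta(k_i)$ at time $\Theta(k_i)$ --- is correct, but nothing in the corridor's stationary, arm-indexed reward structure can enforce it: you would need the excursion length required at time $t$ to grow like $\Theta(t)$ for \emph{every} member simultaneously, which means the reward trigger must depend on the agent's own history, not merely on its position. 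That is the missing idea, and without it the construction collapses precisely into the diluted regime you identify (consistent with the fact that, in the corridor, $D(i)\equiv 1$ gives recoverability, and recoverable families with constant certification costs are learnable in the $\low V$ sense by exactly this sparse-excursion policy).

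The paper's proof supplies the time-coupling directly. It uses two actions and rewards $\{0,1,2\}$: in $\nu_s$, action $a$ always pays $1$, and action $b$ pays $2$ only once the longest consecutive run of $b$'s exceeds $n_i$, the number of $a$'s taken so far, with $n_i\ge s$; otherwise $b$ pays $0$. Tying the trigger to $n_i$ makes the sacrifice needed to unlock reward $2$ at least as large as the accumulated exploitation time at whatever moment it is attempted, so it cannot be amortized by scheduling it late; and the parameter $s$ forces a fresh such sacrifice for every member of the countable family. Running any purportedly self-optimizing $p$ in $\nu_0$ (where $b$ always pays $0$), the history coincides with the one $p$ would generate in $\nu_s$ up to the trigger, so for every $s$ there is a time $n$ at which the number of zero-reward $b$-steps is at least the number $n_i\ge s$ of one-reward $a$-steps, forcing the running average to at most $1/2$ infinitely often, i.e.\ $\low V(\nu_0,p)\le 1/2<1=V^*_{\nu_0}$. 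Meanwhile each $\nu_s$ with $s>0$ satisfies the first bullet with $\phi\equiv 0$ and $d(k,\eps)=k$, since after any $y_{<k}$ a run of $b$'s of length about $k$ restores the reward-$2$ regime. Note finally that the proposition requires \emph{deterministic} environments, so your stochastic arms would in any case have to be made deterministic, which only strips the corridor of further flexibility; the history-dependent trigger, not the corridor geometry, is what makes $d(k,\eps)=k$ unavoidable here.
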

Clearly, each environment from such a class $\C$ satisfies the value
stability conditions with $\phi(n,\eps)\equiv0$ except
$d(k,\eps)=k\ne o(k)$.

\begin{proof}
There are two possible actions $y_i\in\{a,b\}$, three possible
rewards $r_i\in\{0,1,2\}$ and no observations.

Construct the environment $\nu_0$ as follows: if $y_i=a$ then
$r_i=1$ and if $y_i=b$ then $r_i=0$ for any $i\in\SetN$.

For each $i$ let $n_i$ denote the number of actions $a$ taken up to
step $i$: $n_i:=\#\{j\le i: y_j=a\}$. For each $s>0$ construct the
environment $\nu_s$ as follows: $r_i(a)=1$ for any $i$, $r_i(b)=2$
if the longest consecutive sequence of action $b$ taken has length
greater than $n_i$ and $n_i\ge s$; otherwise $r_i(b)=0$.

It is easy to see that each $\nu_i$, $i>0$ satisfies the value
stability conditions with $\phi(n,\eps)\equiv0$ except
$d(k,\eps)=k\ne o(k)$, and does not satisfy it with any
$d(k,\eps)=o(k)$. Next we show that there is no self-optimizing
policy for the class.

Suppose that there exists a policy $p$ such that $\low V(\nu_i,p) =
V^*_{\nu_i}$ for each $i>0$ and let the true environment be $\nu_0$.
By assumption, for each $s$ there exists such $n$ that
\beqn
  \#\{i\le n : y_i=b, r_i=0\}\ge s >\#\{i\le n: y_i=a, r_i=1\}
\eeqn
which implies $\low V(\nu_0,p)\le 1/2<1=V^*_{\nu_0}$.
\end{proof}

It is also easy to show that the {\em uniformity of convergence in
(\ref{eq:svs})} cannot be dropped. That is, if in the definition of
value-stability we allow the function $\phi(n,\eps)$ to depend
additionally on the past history $z_{<k}$ then
Theorem~\ref{th:main} does not hold. This can be shown with the same
example as constructed in the proof of Proposition~\ref{th:tight},
letting $d(k,\eps)\equiv0$ but instead allowing
$\phi(n,\eps,z_{<k})$ to take values 0 and 1 according to the number
of actions $a$ taken, achieving the same behaviour as in the
example provided in the last proof.

Moreover, we show that the requirement that the class $\mathcal C$
to be learnt is countable cannot be easily withdrawn. Indeed,
consider the class of all deterministic passive environments in the
sequence prediction setting. In this task an agent gets the reward
$1$ if $y_i=o_{i+1}$ and the reward $0$ otherwise, where the
sequence of observation $o_i$ is deterministic. Different sequences
correspond to different environments. As it was mentioned before,
any such environment $\nu$ is value-stable with
$d_\nu(k,\eps)\equiv1$, $\phi_\nu(n,\eps)\equiv0$ and
$r^\nu_i\equiv1$. Obviously, the class of all deterministic passive
environments is not countable. Since for every policy $p$ there is
an environment on which $p$ errs exactly on each step, the class of
all deterministic passive environments cannot be learned. Therefore,
the following statement is valid:
\begin{claim}
There exist (uncountable) classes of value-stable environments for
which there are no self-optimizing policies.
\end{claim}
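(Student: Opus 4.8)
The plan is to make the uncountable class explicit as the class of all deterministic passive environments in the sequence-prediction setting, and then to defeat every candidate policy by a diagonal (adversarial) construction. Concretely, fix an observation alphabet $\O$ with $\#\O\ge2$, take $\Y=\O$ and $\R=\{0,1\}$, and for each infinite sequence $o=o_1o_2\dots\in\O^\infty$ let $\nu_o$ be the environment whose observation in cycle $k$ is the fixed symbol $o_k$ and whose reward is $r_k=1$ if $y_k=o_{k+1}$ and $r_k=0$ otherwise. Distinct sequences give distinct environments and $\O^\infty$ is uncountable, so this is an uncountable family. By the observation already recorded in Section~\ref{sec:ex} (and recalled just above the claim), each $\nu_o$ is value-stable with $r^\nu_i\equiv1$, $d_\nu(k,\eps)\equiv1$, $\phi_\nu(n,\eps)\equiv0$, and $V^*_{\nu_o}=1$, the optimum being realised by the policy that plays $y_k=o_{k+1}$ at every step.

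The heart of the argument is the diagonalization. Since value-maximizing policies may be taken deterministic (Section~\ref{secNot}), it suffices to rule out deterministic policies. Given any deterministic policy $p$, I would build a defeating sequence $o^{(p)}$ by induction on the cycle index: fix $o_1$ arbitrarily, and for $i\ge1$ note that the prediction $y_i=p(z_{<i})$ depends only on the already-determined history $z_{<i}$, so after reading off $y_i$ I set $o_{i+1}$ to be any symbol with $o_{i+1}\ne y_i$ (possible because $\#\O\ge2$). This forces $r_i=0$ for every $i$, and the construction is well founded with no circularity, because each $y_i$ is fixed before the symbol $o_{i+1}$ that determines $r_i$ is chosen.

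It then follows that $p$ earns reward $0$ at every step against $\nu:=\nu_{o^{(p)}}$, so $\low V(\nu,p)=\up V(\nu,p)=0$. As $V^*_\nu=1$, we get $\low V(\nu,p)=0<1=V^*_\nu$, whence $p$ is not self-optimizing for the class. Since $p$ was an arbitrary (deterministic, hence by the above reduction general) policy, no self-optimizing policy exists for this uncountable class of value-stable environments, which is exactly the claim.

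Finally, I would flag the two points that keep the construction honest rather than routine. First, one must check that $\nu_o$ is a bona fide environment in the agent framework: the rule $r_k=\mathbf{1}[y_k=o_{k+1}]$ references the next observation, but since the sequence $o$ is fixed in advance, $o_{k+1}$ is a constant, $r_k$ is a deterministic function of $y_k$ alone, and $\mu(x_k\mid z_{<k}y_k)$ is thus well defined and passive (independent of the actions). Second, the construction genuinely needs $\#\O\ge2$, and this is the only place the hypothesis enters; the phenomenon disappears for a singleton observation space. I expect no deeper obstacle, since uncountability and per-environment value-stability are immediate from the earlier sections; the only real care needed is verifying that the inductive ``always predict wrong'' environment is admissible and that its optimal value equals $1$.
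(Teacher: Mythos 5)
Your proposal is correct and takes essentially the same approach as the paper: the paragraph preceding the Claim in Section~\ref{sec:nec} uses exactly the uncountable class of deterministic passive sequence-prediction environments (each value-stable with $r^\nu_i\equiv1$, $d_\nu(k,\eps)\equiv1$, $\phi_\nu(n,\eps)\equiv0$) and the observation that for every policy there is an environment on which it errs at every step. You merely make explicit the diagonal construction of the defeating sequence, the reduction to deterministic policies, and the well-definedness of $\nu_o$, all of which the paper leaves implicit.
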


However, strictly speaking, even for countable classes
value-stability is not necessary for self-optimizingness. This can
be demonstrated on the class $\nu_i: i>0$ from the proof of
Proposition~\ref{th:tight}. (Whereas if we add $\nu_0$ to the class
a self-optimizing policy no longer exists.) So we have the
following:
\begin{claim}
There are countable classes of not value-stable environments for which
self-optimizing policies exist.
\end{claim}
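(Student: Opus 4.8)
The plan is to reuse the family $\{\nu_i : i>0\}$ constructed in the proof of Proposition~\ref{th:tight} and to exhibit a single open-loop (reward-independent) policy that is self-optimizing for it. Recall that in each $\nu_s$ the action $a$ always yields reward $1$, while $b$ yields reward $2$ precisely when at least $s$ actions $a$ have been played ($n_i\ge s$) and the relevant run of $b$'s exceeds $n_i$, and $0$ otherwise; hence $V^*_{\nu_s}=2$, attained by playing $a$ exactly $s$ times and then $b$ forever. The proof of Proposition~\ref{th:tight} already records that each such $\nu_s$ fails value-stability (it satisfies (\ref{eq:svs}) only with $d(k,\eps)=k\ne o(k)$), so it remains only to construct the self-optimizing policy; the point of dropping $\nu_0$ is that on $\nu_0$ the action $b$ never pays off.

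First I would define the policy $p$ by a fixed action schedule organised in blocks $m=1,2,\dots$: in block $m$ play one action $a$ and then $N_m:=m^2$ consecutive actions $b$. This schedule plays $a$ infinitely often, so for every fixed $s$ one has $n_i\ge s$ from some block on, ``activating'' the reward-$2$ regime; at the same time the $b$-runs grow, so they eventually exceed the current count $n_i=m$ of $a$'s, which is what is needed to earn reward $2$.

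Next I would estimate the reward deficit relative to the maximal value $2$. Fix $s$ and take $\nu_s$ as the true environment. Writing $T_M=\sum_{m\le M}(1+N_m)\approx M^3/3$ for the time at the end of block $M$, the blocks $m<s$ are ``wasted'' but contribute only the constant $C_s:=\sum_{m<s}(1+2N_m)$ to the deficit, while each block $m\ge s$ contributes at most $2m+1$ (one sub-optimal $a$ plus, at worst, the first $m$ $b$'s of its run). Hence the total deficit up to $T_M$ is $O(C_s+M^2)=O(M^2)=o(T_M)$, and the same bound holds at intermediate times since a single block adds at most $O(M)$ to the running deficit while the elapsed time is already $\Theta(M^3)$. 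Therefore $\odm\r1m^{p\nu_s}\to 2=V^*_{\nu_s}$, so $V(\nu_s,p)=V^*_{\nu_s}$ for every $s>0$ and $p$ is self-optimizing for $\{\nu_i:i>0\}$, which together with the non-value-stability of each $\nu_s$ proves the claim.

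The main obstacle is the choice of block lengths: $p$ must play $a$ infinitely often (to guarantee $n_i\ge s$ for every \emph{unknown} $s$), yet rarely enough, and with long enough intervening $b$-runs, that the fraction of steps spent at reward $2$ tends to $1$. Superlinear growth such as $N_m=m^2$ resolves this tension by forcing the cumulative deficit to be $o(t)$ uniformly past the (finitely many) wasted early blocks. The contrast with $\nu_0$, on which every $b$ gives reward $0$ and hence $\low V(\nu_0,p)=0<1=V^*_{\nu_0}$, explains why the statement must exclude $\nu_0$.
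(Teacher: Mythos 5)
Your proposal is correct and takes essentially the same route as the paper: the paper proves this claim merely by pointing to the class $\{\nu_i : i>0\}$ from the proof of Proposition~\ref{th:tight}, which is exactly the class you use, and by noting (as you do) that adding $\nu_0$ would destroy self-optimizingness. Your block schedule (one action $a$ followed by $m^2$ actions $b$ in block $m$) correctly supplies the explicit self-optimizing policy that the paper leaves implicit, and your deficit accounting ($O(M^2)$ cumulative loss against elapsed time $\Theta(M^3)$, uniformly over intermediate times) soundly verifies $V(\nu_s,p)=V^*_{\nu_s}=2$ for every $s>0$.
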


\section{Discussion}\label{secDisc}

\paradot{Summary}
We have proposed a set of conditions on environments, called
value-stability, such that any countable class of value-stable
environments admits a self-optimizing policy. It was also shown that
these conditions are in a certain sense tight. The class of all
value-stable environments includes ergodic MDPs, certain class of
finite POMDPs, passive environments, and (provably) more
environments.
So the concept of value-stability allows us to characterize
self-optimizing environment classes, and proving value-stability is
typically much easier than proving self-optimizingness directly.
Value stability means that from any (sup-optimal) sequence of
actions it is possibly to recover fast. If it is possible to
recover, but not necessarily fast, then we get a condition which we
called recoverability, which was shown to be sufficient to be able
to recover the upper limit of the optimal average asymptotic value.
We have also analyzed what can be achieved in environments which
possess (worst-case) value-stability but are not recoverable; it
turned out that a certain worst-case self-optimizingness can be
identified in this case too.

On the following picture we summarize the concepts introduced in
Sections~\ref{secSetup}, \ref{secMain} and~\ref{sec:nonrec}. The
arrows symbolize implications: some of them follow from theorems or
stated in definitions (marked accordingly), while others are
trivial.

\begin{center}
\small
\unitlength=1.3pt
\begin{picture}(310,180)

\put(0,150){\framebox(60,14)[c]{explorable}}
\put(110,157){\vector(-1,0){50}}
\put(80,160){\text{{\tiny Def.\ref{def:vstable}}}}

\put(30,150){\vector(0,-1){25}}

\put(110,150){\framebox(60,14)[c]{value-stable}}
\put(170,157){\vector(1,0){50}}
\put(170,156){\vector(1,0){50}}
\put(215,154){\text{$\blacktriangleright$}}
\put(180,160){\text{{\tiny Th.\ref{th:main}}}}

\put(140,150){\vector(0,-1){25}}
\put(110,150){\vector(0,-1){65}}
\put(260,150){\vector(0,-1){25}}
\put(223,150){\vector(0,-1){65}}

\put(223,150){\framebox(75,14)[c]{self-optimizing}}

\put(3,110){\framebox(76,14)[c]{{\small upper explorable}}}
\put(112,117){\vector(-1,0){32}}
\put(85,120){\text{{\tiny Def.\ref{def:vstable2} }}}

\put(0,84){\vector(0,1){66}}
\put(298,110){\vector(0,-1){66}}

\put(114,110){\framebox(56,14)[c]{recoverable}}
\put(170,117){\vector(1,0){54}}
\put(170,116){\vector(1,0){54}}
\put(219,114){\text{$\blacktriangleright$}}

\put(180,120){\text{{\tiny Th.\ref{th:wr} }}}
\put(226,110){\framebox(72,14)[c]{upper self-opt.}}

\put(0,70){\framebox(60,14)[c]{{\small strongly exp.}}}
\put(97,77){\vector(-1,0){37}}
\put(75,80){\text{{\tiny Def.\ref{def:wstable} }}}

\put(30,70){\vector(0,-1){25}}
\put(260,70){\vector(0,-1){25}}

\put(97,70){\framebox(80,14)[c]{{\small worst-case val.-st.}}}
\put(177,77){\vector(1,0){33}}
\put(177,76){\vector(1,0){33}}
\put(205,74){\text{$\blacktriangleright$}}

\put(180,80){\text{{\tiny Th.\ref{th:worst}\,(i) }}}
\put(213,70){\framebox(82,14)[c]{{\small worst-case self-opt.}}}

\put(0,30){\framebox(90,14)[c]{{\small strongly upper exp.}}}

\put(70,44){\vector(0,1){66}}

\put(90,37){\vector(1,0){100}}
\put(90,36){\vector(1,0){100}}
\put(185,34){\text{$\blacktriangleright$}}

\put(120,40){\text{{\tiny Th.\ref{th:worst}\,(ii) }}}
\put(193,30){\framebox(112,14)[c]{{\small worst-case upper self-opt.}}}
\end{picture}
\end{center}

\paradot{Outlook}
We considered only countable environment classes $\C$. From a
computational perspective such classes are sufficiently large (e.g.\
the class of all computable probability measures is countable). On
the other hand, countability excludes continuously parameterized
families (like all ergodic MDPs), common in statistical practice.
So perhaps the main open problem is to find under which conditions
the requirement of countability of the class can be lifted. Another
important question is whether (meaningful) necessary and sufficient
conditions for self-optimizingness can be found. However,
identifying classes of environments for which self-optimizing
policies exist is a hard problem which has not been solved even for
passive environments \cite{RyabkoHutter:06pq}.

One more question concerns the uniformity of forgetfulness of the
environment. Currently in the definition of
value-stability~(\ref{eq:svs}) we have the function $\phi(n,\eps)$
which is the same for all histories $z_{<k}$, that is, both for all
actions histories $y_{<k}$ and observations-rewards histories
$x_{<k}$. Probably it is possible to differentiate between two types
of forgetfulness, one for actions and one for perceptions.

In this work we have chosen the asymptotic uniform average value
$\lim\odm\r1m^{p\nu}$ as our performance measure. Another popular
measure is the asymptotic discounted value $\gamma_1 r_1+\gamma_2
r_2+...$, where $\v\gamma$ is some (typically geometric
$\gamma_k\propto \gamma^k$) discount sequence. One can show
\cite{Hutter:06discount} under quite general conditions that the
limit of average and future discounted values coincide. Equivalence
holds for bounded rewards and monotone decreasing $\v\gamma$, in
deterministic environments and, in expectation over the history,
also for probabilistic environments. So, in these cases our results
also apply to discounted value.

Finally, it should be mentioned that we have concentrated on optimal
values which can be obtained with certainty (with probability one);
towards this aim we have defined (upper, strong) explorability and
only considered environments which possess one of these properties.
It would also be interesting to analyze what is achievable in
environments which are not (upper, strongly) explorable; for
example, one could consider optimal expected value, and may be some
other criteria.

\appendix
\section{Proofs of Theorems~\ref{th:main} and~\ref{th:wr}}\label{secPrThMain}
In each of the proofs, a self-optimizing (or upper self-optimizing)
policy $p$ will be constructed. When the policy $p$ has been defined
up to a step $k$, an environment $\mu$, endowed with this policy,
can be considered as a measure on $\Z^k$. We assume this meaning
when we use environments as measures on $\Z^k$ (e.g. $\mu(z_{<i})$).

\paradot{Proof of Theorem~\ref{th:main}}
A self-optimizing policy $p$ will be constructed as follows. On each
step we will have two polices: $p^t$ which exploits and $p^e$ which
explores; for each $i$ the policy $p$ either takes an action
according to $p^t$ ($p(z_{<i})=p^t(z_{<i})$) or according to $p^e$
($p(z_{<i})=p^e(z_{<i})$), as will be specified below.

In the algorithm below, $i$ denotes the number of the current step
in the sequence of actions-observations. Let $n=1$, $s=1$, and
$j^t=j^e=0$. Let also $\alpha_s=2^{-s}$ for $s\in\SetN$. For each
environment $\nu$, find such a sequence of real numbers $\eps^\nu_n$
that $\eps^\nu_n\rightarrow0$ and
$\sum_{n=1}^\infty\phi_\nu(n,\eps^\nu_n)\le\infty$.

Let $\i: \SetN\rightarrow\C$ be such a numbering that each
$\nu\in\C$ has infinitely many indices. For all $i>1$ define a
measure $\xi$ as follows
\beq\label{eq:xi}
  \xi(z_{<i})=\sum_{\nu\in\mathcal C}w_\nu\nu(z_{<i}),
\eeq
where $w_\nu\in\R$ are (any) such numbers that $\sum_{\nu}w_\nu=1$
and $w_\nu>0$ for all $\nu\in\mathcal C$.

\noindent{\bf\boldmath Define $T$.}
On each step $i$ let
\beqn
  T \;\equiv\; T_i \;:=\;
  \left\{\nu\in\C:\frac{\nu(z_{<i})}{\xi(z_{<i})}\ge\alpha_s\right\}
\eeqn

\noindent{\bf\boldmath Define $\nu^t$.} Set $\nu^t$ to be the first
environment in $T$ with index greater than $\i(j^t)$. In case this
is impossible (that is, if $T$ is empty), increment $s$, (re)define
$T$ and try again. Increment $j^t$.

\noindent{\bf\boldmath Define $\nu^e$.} Set $\nu^e$ to be the first
environment with index greater than $\i(j^e)$ such that
$V^*_{\nu^e}>V^*_{\nu^t}$ and $\nu^e(z_{<k})>0$, if such an
environment exists. Otherwise proceed one step (according to $p^t$)
and try again. Increment $j^e$.

\noindent{\bf Consistency.} On each step $i$ (re)define $T$. If
$\nu^t\notin T$, define $\nu^t$, increment $s$ and iterate the
infinite loop. (Thus $s$ is incremented only if $\nu^t$ is not in
$T$ or if $T$ is empty.)

Start the {\bf infinite loop}. Increment $n$.

Let $\delta:=(V^*_{\nu^e}-V^*_{\nu^t})/2$. Let
$\eps:=\eps^{\nu^t}_n$. If $\eps<\delta$ set $\delta=\eps$. Let
$h=j^e$.

\noindent{\bf Prepare for exploration.}

Increment $h$. The index $h$ is incremented with each next attempt
of exploring $\nu^e$. Each attempt will be at least $h$ steps in
length.

Let $p^t=p^{y_{<i}}_{\nu^t}$ and set $p=p^t$.

Let $i_h$ be the current step. Find $k_1$ such that
\beq\label{eq:k1}
  \frac{i_h}{k_1}V^*_{\nu^t} \le\eps/8
\eeq
Find $k_2>2i_h$ such that for all $m>k_2$
\beq\label{eq:k2}
  \left|\frac{1}{m-i_h} \r{i_h+1}{m}^{\nu^t}-V^*_{{\nu^t}}\right|\le \eps/8.
\eeq
Find $k_3$ such that
\beq\label{eq:k3}
  hr_{max}/k_3<\eps/8.
\eeq
Find $k_4$ such that for all $m>k_4$
\beq\label{eq:d}
  \frac{1}{m}d_{{\nu^e}}(m,\eps/4)\le
  \eps/8 \text{, \ \ }
  \frac{1}{m}d_{\nu^t}(m,\eps/8)\le \eps/8
 \text{\ \ and\ \ }
  \frac{1}{m}d_{\nu^t}(i_h,\eps/8)\le \eps/8.
\eeq
Moreover, it is always possible to find such
$k>\max\{k_1,k_2,k_3,k_4\}$ that
\beq\label{eq:k}
  \frac{1}{2k}\r{k}{3k}^{\nu^e} \ge \frac{1}{2k}\r{k}{3k}^{\nu^t} + \delta.
\eeq

Iterate up to the step $k$.

\noindent{\bf Exploration.}
Set $p^e=p_{{\nu^e}}^{y_{<n}}$. Iterate $h$ steps according to
$p=p^e$. Iterate further until either of the following conditions
breaks
\begin{itemize}
\item[$(i)$] $\left|\r{k}{i}^{\nu^e}-\r{k}{i}^{p\nu}\right|
              < (i-k)\eps/4+d_{\nu^e}(k,\eps/4)$,
\item[$(ii)$] $i<3k$.
\item[$(iii)$] $\nu^e\in T$.
\end{itemize}
Observe that either $(i)$ or $(ii)$ is necessarily broken.

If on some step $\nu^t$ is excluded from $T$ then the infinite loop
is iterated. If after exploration $\nu^e$ is not in $T$ then
redefine $\nu^e$ and {\bf iterate the infinite loop}. If both
$\nu^t$ and $\nu^e$ are still in $T$ then {\bf return} to ``Prepare
for exploration'' (otherwise the loop is iterated with either
$\nu^t$ or $\nu^e$ changed).

\noindent{\bf\boldmath End} of the infinite loop and the
algorithm.

Let us show that with probability $1$ the ``Exploration'' part is
iterated only a finite number of times in a row with the same
$\nu^t$ and $\nu^e$.

Suppose the contrary, that is, suppose that (with some non-zero
probability) the ``Exploration'' part is iterated infinitely often
while $\nu^t,\nu^e\in T$. Observe that (\ref{eq:svs}) implies that
the $\nu^e$-probability that $(i)$ breaks is not greater than
$\phi_{\nu_e}(i-k,\eps/4)$; hence by Borel-Cantelli lemma the event
that $(i)$ breaks infinitely often has probability 0 under $\nu^e$.

Suppose that $(i)$ holds almost every time. Then $(ii)$ should be
broken except for a finite number of times. We can use
(\ref{eq:k1}), (\ref{eq:k2}), (\ref{eq:d}) and (\ref{eq:k}) to show
that with probability at least $1-\phi_{\nu^t}(k-i_h,\eps/4)$ under
$\nu^t$ we have $\frac{1}{3k}\r{1}{3k}^{p\nu^t}\ge
V^*_{\nu^t}+\eps/2$. Again using Borel-Cantelli lemma and $k>2i_h$
we obtain that the event that $(ii)$ breaks infinitely often has
probability $0$ under $\nu^t$.

Thus (at least) one of the environments $\nu^t$ and $\nu^e$ is
singular with respect to the true environment $\nu$ given the
described policy and current history. Denote this environment by
$\nu'$. It is known (see e.g. \cite[Thm.26]{CsiszarShields:04}) that
if measures $\mu$ and $\nu$ are mutually singular then
$\frac{\mu(x_1,\dots,x_n)}{\nu(x_1,\dots,x_n)}\rightarrow\infty$
$\mu$-a.s. Thus
\beq\label{eq:sing}
  \frac{\nu'(z_{<i})}{\nu(z_{<i})}\rightarrow0\text{ $\nu$-a.s.}
\eeq
Observe that (by definition of $\xi$) $\frac{\nu(z_{<i})}{\xi(z_{<i})}$ is
bounded. Hence using (\ref{eq:sing}) we can see that
\beqn
  \frac{\nu'(z_{<i})}{\xi(z_{<i})}\rightarrow0\text{ $\nu$-a.s.}
\eeqn
Since $s$ and $\alpha_s$ are not changed during the exploration
phase this implies that on some step $\nu'$ will be excluded from
$T$ according to the ``consistency'' condition, which contradicts
the assumption. Thus the ``Exploration'' part is iterated only a
finite number of times in a row with the same $\nu^t$ and $\nu^e$.

Observe that $s$ is incremented only a finite number of times since
$\frac{\nu'(z_{<i})}{\xi(z_{<i})}$ is bounded away from $0$ where
$\nu'$ is either the true environment $\nu$ or any environment from
$\C$ which is equivalent to $\nu$ on the current history. The latter
follows from the fact that $\frac{\xi(z_{<i})}{\nu(z_{<i})}$ is a
submartingale with bounded expectation, and hence, by the
submartingale convergence theorem (see e.g. \cite{Doob:53})
converges with $\nu$-probability 1.

Let us show that from some step on $\nu$ (or an environment
equivalent to it) is always in $T$ and selected as $\nu^t$. Consider
the environment $\nu^t$ on some step $i$. If $V^*_{\nu^t}>V^*_\nu$
then $\nu^t$ will be excluded from $T$ since on any optimal for
$\nu^t$ sequence of actions (policy) measures $\nu$ and $\nu^t$ are
singular. If $V^*_{\nu^t}<V^*_\nu$ than $\nu^e$ will be equal to
$\nu$ at some point, and, after this happens sufficient number of
times, $\nu^t$ will be excluded from $T$ by the ``exploration'' part
of the algorithm, $s$ will be decremented and $\nu$ will be
included into $T$. Finally, if $V^*_{\nu^t}=V^*_\nu$ then either the
optimal value $V^*_\nu$ is (asymptotically) attained by the policy
$p_t$ of the algorithm, or (if $p_{\nu^t}$ is suboptimal for $\nu$)
$\frac{1}{i} \r1i^{p{\nu^t}}< V^*_{\nu^t}-\eps$ infinitely often for
some $\eps$, which has probability $0$ under $\nu^t$ and
consequently $\nu^t$ is excluded from $T$.

Thus, the exploration part ensures that all environments not
equivalent to $\nu$ with indices smaller than $\i(\nu)$ are removed
from $T$ and so from some step on $\nu^t$ is equal to (an
environment equivalent to) the true environment $\nu$.

We have shown in the ``Exploration'' part that $n\rightarrow\infty$,
and so $\eps^{\nu^t}_n\rightarrow0$. Finally, using the same
argument as before (Borel-Cantelli lemma, $(i)$ and the definition
of $k$) we can show that in the ``exploration'' and ``prepare for
exploration'' parts of the algorithm the average value is within
$\eps^{\nu^t}_n$ of $V^*_{\nu^t}$ provided the true environment is
(equivalent to) $\nu^t$. \hspace*{\fill}$\Box\quad$

\paradot{Proof of Theorem~\ref{th:wr}}
Let $\i: \SetN\rightarrow\C$ be such a numbering that each
$\nu\in\C$ has infinitely many indices. Define the measure $\xi$ as
in~(\ref{eq:xi}). The policy $p$ acts according to the following
algorithm.

Set $\eps_s =\alpha_s=2^{-s}$ for $s\in\SetN$, set $j=1$, $s=n=1$.
The integer $i$ will denote the current step in time.

Do the following \emph{ad infinitum}. Set $\nu$ to be the first
environment in $\C$ with index greater than $\i(j)$. Find the policy
$p_\nu$ which achieves the upper limiting average value with
probability one (such policy exists by definition of
recoverability). Act according to $p_\nu$ until either
\beq \label{eq:cond1}
  \left|{1\over i} r_{1..i}^{p\nu} - \up V^*(p,p_\nu)\right| <\eps_n
\eeq
or
\beq \label{eq:cond2}
  \frac{\nu(z_{<i})}{\xi(z_{<i})} < \alpha_s.
\eeq
Increment $n$, $s$, $i$.

It can be easily seen that one of the conditions necessarily breaks.
Indeed, either in the true environment the optimal upper limiting
average value for the current environment $\nu$ can be achieved by
the optimal policy $p_\nu$, in which case (\ref{eq:cond1}) breaks;
or it cannot be achieved, which means that $\nu$ and $\xi$ are
singular, which implies that (\ref{eq:cond2}) will be broken (see
e.g. \cite[Thm.26]{CsiszarShields:04}; cf. the same argument in the
proof of Theorem~\ref{th:main}). Since $\nu$ equals the true
environment infinitely often and $\eps_n\rightarrow 0$ we get the
statement of the theorem. \qed

\paranodot{Proof of Theorem~\ref{th:worst}} is analogous to the
proofs of Theorems~\ref{th:main} and~\ref{th:wr}, except for the
following. Instead of the optimal average value $V^*_\nu$ and upper
optimal average value $\up V^*_\nu$ the values $V^*_\nu(z_{<k})$
and $\up V^*_\nu(z_{<k})$ should be used, and they should be updated
after each step $k$. \qed


\begin{small}
\bibliographystyle{alpha}

\end{small}

\end{document}